\documentclass[twoside]{article}

\usepackage{natbib}
\usepackage{amsmath,amssymb,amsthm,graphicx,booktabs,xcolor}
\usepackage[hidelinks]{hyperref}
\usepackage{algorithm,algorithmic}
\usepackage[preprint]{aistats2027}
\newtheorem{theorem}{Theorem}
\newtheorem{proposition}{Proposition}
\newtheorem{corollary}{Corollary}
\newtheorem{lemma}{Lemma}
\newtheorem{assumption}{Assumption}
\newtheorem{definition}{Definition}
\newcommand{\PR}{\mathbb{P}}
\newcommand{\E}{\mathbb{E}}
\newcommand{\ind}{\mathbf{1}}
\newcommand{\FAR}{\mathrm{FAR}}
\newcommand{\TAR}{\mathrm{TAR}}
\newcommand{\cP}{\mathcal{P}}

\newcommand{\AdvCPMax}{100}
\newcommand{\AdvCatMax}{2.48}
\newcommand{\AdvCatMin}{0.33}

\newcommand{\AdvFixedMax}{1.44}
\newcommand{\AdvFixedMin}{0.50}
\newcommand{\AdvNaiveMax}{3.37}
\newcommand{\BonfZ}{4.03}
\newcommand{\BootFARRatioMaxUnder}{37992}
\newcommand{\BootPACViolMaxWell}{20}
\newcommand{\BootPACViolMedMid}{21}
\newcommand{\BootPACViolMedUnder}{99}
\newcommand{\BootPACViolMedWell}{6}
\newcommand{\BootTARvsCoveMed}{90\%}

\newcommand{\CohortGainMax}{3.8}

\newcommand{\CohortGainMedShown}{2.1}
\newcommand{\CoveBeatsOracleFrac}{37}
\newcommand{\CoveBeatsOracleMaxPct}{21}

\newcommand{\CoveDrawViolMed}{46}

\newcommand{\CoveMaxAbsZ}{2.92}

\newcommand{\CoveMedRatio}{1.001}
\newcommand{\CoveMinAlpha}{10^{-7}}
\newcommand{\CoveMinRatioPct}{83}
\newcommand{\CoveNoneReject}{none}

\newcommand{\CoveValidityCells}{903}
\newcommand{\CslsMinRatio}{0.37}
\newcommand{\CslsWorseFrac}{24}
\newcommand{\DevGPUHours}{4.6}
\newcommand{\DevUntimedRecords}{51}
\newcommand{\DisjCPCombos}{622}
\newcommand{\DisjCPNontrivCells}{143}
\newcommand{\DisjCPTARvsCoveMed}{87\%}
\newcommand{\DisjCPViolMax}{8}
\newcommand{\EvalGPUHours}{4.3}
\newcommand{\FeatureGPUHours}{0.6}
\newcommand{\HubLoadThreeAcc}{65\%}
\newcommand{\HubLoadZeroAcc}{96\%}

\newcommand{\KappaMaxFour}{76}
\newcommand{\KappaMaxTwo}{3}

\newcommand{\KappaMinFour}{4.9}
\newcommand{\KappaMinTwo}{0.4}
\newcommand{\MainTARRatio}{100\%}
\newcommand{\MainTARRatioQlo}{92\%}
\newcommand{\NCells}{149}
\newcommand{\NDatasets}{6}
\newcommand{\NEncoders}{8}
\newcommand{\NGenPerDraw}{20{,}000}
\newcommand{\NImpPerDraw}{500{,}000}
\newcommand{\NPowerCells}{374}
\newcommand{\NReps}{500}
\newcommand{\NRepsOther}{200}

\newcommand{\NaiveCPMedLargeM}{42}

\newcommand{\NaiveCPMedSmallM}{8}
\newcommand{\NaiveCPWorstPct}{55}
\newcommand{\NaiveFracOverResolved}{36}

\newcommand{\NaiveWorstRatioResolved}{2.87}
\newcommand{\OracleAtLevel}{at COVE's exact level}
\newcommand{\OracleUnreliableDropped}{35}
\newcommand{\PairsIidFARMax}{1.61}
\newcommand{\PairsIidRejectPct}{5.3}
\newcommand{\PairsIidTARRatioMed}{100\%}
\newcommand{\PeakMemGB}{9.9}

\newcommand{\RhoAlphaMaxFour}{320}

\newcommand{\RhoAlphaMinFour}{44}

\newcommand{\RhoMaxFour}{0.032}

\newcommand{\RhoMinFour}{0.004}

\newcommand{\SnormWorseFrac}{26}
\newcommand{\TARRatioB}{100\%}
\newcommand{\TARRatioC}{100\%}
\newcommand{\UsPerTrialTenThousand}{3.3}
\newcommand{\UsPerTrialThousand}{0.7}

\begin{document}

% If your paper is accepted and the title of your paper is very long,
% the style will print as headings an error message. Use the following
% command to supply a shorter title of your paper so that it can be
% used as headings.
%
%\runningtitle{I use this title instead because the last one was very long}

% If your paper is accepted and the number of authors is large, the
% style will print as headings an error message. Use the following
% command to supply a shorter version of the author names so that
% they can be used as headings (for example, use only the surnames)
%
%\runningauthor{Surname 1, Surname 2, Surname 3, ...., Surname n}

\twocolumn[

\aistatstitle{Identities, Not Pairs: Distribution-Free False-Accept\\Control for Embedding-Based Verification}

\aistatsauthor{ Salim Khazem }

\aistatsaddress{ Talan Research Center } ]

\begin{abstract}
Embedding-based verification often calibrates a false-accept threshold from all impostor pairs among $m$ identities, although these pairs share identities. We characterize a worst-case limit: when hub identities score highly against a fraction $\rho$ of the population, a calibration-only threshold with marginal false-accept rate (FAR) at most $\alpha$ under every identity-exchangeable law
reaches the hub score on hub-free calibration sets with probability at most $(m+2)\alpha/(2\rho)$. Uniformly accepting near that score at levels well below $\rho/m$ therefore requires trial information. We propose COVE, which ranks the trial pair among all $N=\binom{m+2}{2}$ pairs formed by augmenting the calibration units with the probe and reference. Under identity exchangeability,
its marginal FAR is at most $\lfloor\alpha N\rfloor/N$, with first possible acceptance at $1/N=O(1/m^2)$. The rule also retains this guarantee with a fixed symmetric cohort-normalized score using an independent cohort. On 6 benchmarks and 8 frozen encoders, realised FAR was consistent with the exact bound. Where both valid calibration-only baselines accepted nothing, COVE retained a median $100\%$ of the sampled oracle's true-accept rate over reliable comparisons. In contrast, an empirical quantile over dependent pairs reached 2.87$\,\alpha$ FAR, and a binomial upper confidence bound over pairs missed its nominal 95\% conditional-FAR target in up to 55\% of calibration draws. Symmetric CSLS raised COVE's true-accept rate by up to 4.3-fold in the tested settings. The code is available in \url{https://github.com/salimkhazem/Cove.git}
\end{abstract}

\section{Introduction}
\label{sec:introduction}
A large class of vision systems decides whether two images show the \emph{same identity} by thresholding the similarity of frozen embeddings: product and landmark retrieval, animal re-identification, near-duplicate removal in dataset curation, and open-set matching with foundation encoders such as DINOv2 or CLIP, whether deployed directly or tailored via parameter-efficient adaptation, visual prompting, or distillation \citep{khazem2026multi,khazem2026topolora,khazem2026safe} face and person verification rest on the same decision, but we do not study them. The operator of such a system specifies a tolerated \emph{false-accept rate} (FAR) the probability of
declaring a match between two different identities and it is typically small, $10^{-4}$ to $10^{-6}$. The threshold is set on a labelled calibration set, and the standard recipe takes a quantile of the similarity over all \emph{impostor pairs}, the $m(m-1)/2$ pairs of images from different calibration identities, often with a binomial confidence bound that treats the pairs as independent draws. Those pairs are not independent: every identity appears in $m-1$ of them, and an identity that resembles many others, a ``wolf'' in the biometric zoo \cite{doddington1998sheep}, a ``hub'' in high-dimensional retrieval \cite{radovanovic2010hubs}, inflates many pair scores at once. This paper asks what can be guaranteed, without distributional assumptions, from $m$ calibration identities, and answers with a separation.

Our contributions are threefold. (i) We prove a sharp limitation of calibration-only FAR control for pairwise similarity problems: under identity exchangeability, hub identities can force any valid calibration-only threshold to operate at resolution of order $\rho/m$, yielding an order-$\rho m$ gap relative to the transductive scale; the bound is tight in a canonical setting (Theorem~\ref{thm:lower}). (ii) We introduce COVE, a transductive conformal procedure that ranks the trial within the augmented pairwise array. COVE achieves distribution-free marginal FAR control at level $\alpha$, rank resolution of order $1/m^2$, and retains validity with symmetric scores learned from an independent cohort (Theorem~\ref{thm:valid}, Corollary~\ref{cor:cohort}, Proposition~\ref{prop:power}. (iii) Across \NDatasets\ benchmarks and \NEncoders\ frozen encoders, COVE consistently controls FAR, while standard pair-based calibration degrades in the presence of hubness. In regimes where valid calibration-only rules become trivial, COVE retains a median \MainTARRatio\ of the oracle true-accept rate.

\section{Related work}\label{sec:related}

\paragraph{Conformal inference.} Exchangeable conformal methods \citep{vovk2005algorithmic,lei2018distribution,angelopoulos2023gentle} include outlier and open-set tests \citep{bates2023testing,xie2025openset}. Pair-indexed ranks have appeared in network and dyadic inference
\citep{lunde2023network,lunde2026dyadic,marandon2024conformal,zhao2024conformalized,fermanian2025transductive}; hierarchical and adaptive transductive settings are also studied \citep{dunn2023distribution,gazin2023transductive}. We use that known rank argument for verification, and study its separation from calibration-only thresholds, per-probe behavior and symmetric cohort scores. Related PAC and conditional-coverage limits \citep{vovk2012conditional,park2020pac,barber2021limits} and transductive efficiency bounds \citep{behboodi2025fundamental} address different targets.

\paragraph{Biometric error rates.} Impostor pairs share identities; subset bootstrap and correlated-binomial methods quantify the resulting uncertainty \citep{bolle2004subsets,schuckers2010computational,dass2006validating,fallahi2025reliability,conti2024assessing,fogliato2023confidence}. Biometric wolves and retrieval hubs are documented \citep{doddington1998sheep,une2008wolf,durbet2023untargeted}, and synthetic
identities have been studied for threshold calibration \citep{apila2026synthetic}. Our target is a finite-sample decision rule rather than an error-rate interval.

\paragraph{Score normalization.} Speaker and face verification use cohort normalization  \citep{auckenthaler2000score,choi2025trainable,li2026unified}; retrieval uses CSLS and query bank variants \citep{radovanovic2010hubs,conneau2018word,bogolin2022qbnorm,pan2025dualbank}. These normalizations alone do not establish the distribution-free marginal FAR bound; a fixed symmetric form within COVE inherits it (Corollary~\ref{cor:cohort}).

\section{Setting}\label{sec:setting}

A \emph{unit} is one image of one identity, represented by a frozen embedding. A \emph{pair score} $g(u,v)$ is any measurable \emph{symmetric} function of two units; it may also depend on auxiliary data independent of all units, such as an impostor cohort (Section~\ref{sec:cohort}). The canonical choice is cosine similarity. A verification \emph{trial} is a pair (probe, reference); it is an \emph{impostor} trial if the two units belong to different identities, and a \emph{genuine} trial otherwise.

The calibration set contains units $U_1,\dots,U_m$ of $m$ distinct identities, one unit per identity.\footnote{With several images per identity we draw one at random. When hubness is a property of the identity, Theorem~\ref{thm:lower} holds for any number of images per calibration identity, so extra images cannot move what a calibration-only threshold can guarantee below order $\rho/m$.} A trial contributes $U_{m+1}$ (probe) and $U_{m+2}$ (reference). Write $\cP=\{\{i,j\}:1\le i<j\le m+2\}$ for the $N=\binom{m+2}{2}$ unordered pairs of the augmented set and $S_e=g(U_i,U_j)$ for $e=\{i,j\}$; the trial pair is $e^\star=\{m+1,m+2\}$.

\begin{assumption}[identity exchangeability]\label{ass:exch}
Under the impostor null $H_0$, the trial's two units belong to two distinct identities that do not occur in calibration, and the vector $(U_1,\dots,U_{m+2})$ is exchangeable. 
\end{assumption}

Assumption~\ref{ass:exch} holds when identities are sampled from a common population (without replacement, or i.i.d.\ from a diffuse law) and each unit is an independent image of its identity. It is the identity-level analogue of the exchangeability that split conformal prediction assumes for samples \citep{vovk2005algorithmic,angelopoulos2023gentle}; it does \emph{not} require pairs to be independent. It requires probe and reference images to be captured the same way as calibration images (an enrolled template taken under a different protocol violates it), and it concerns zero-effort impostors (unrelated identities), not targeted look-alikes. It fails when calibration and deployment identities come from different populations, which we test in Section~\ref{sec:experiments} (Q6).

A decision rule maps the calibration units and the trial to accept/reject. Its false-accept rate is $\FAR=\PR_{H_0}(\text{accept})$, the probability over calibration and trial the marginal guarantee of conformal inference. Its true-accept rate $\TAR$ is the acceptance probability on genuine trials. A rule is inductive (calibration-only) if it accepts if $S_{e^\star}\ge\hat\tau$ (or $S_{e^\star}>\hat\tau$) for a threshold $\hat\tau$ computed from the calibration units alone (and independent randomization) for instance from their pair scores; this includes every quantile-based recipe. A rule is trivial at level $\alpha$ if it rejects every trial whatever the calibration set, and its resolution is the smallest level at which it is not.

\section{What calibration-only thresholds can guarantee}\label{sec:lower}

The recipe ``take the $(1-\alpha)$-quantile of the $\binom m2$ impostor pair scores'' suggests a resolution of order $1/m^2$. A hub construction shows that no calibration-only threshold can \emph{guarantee} it without assumptions.

\begin{definition}[hubs]\label{def:hub}
Fix a unit law $Q$ and a level $\bar c$. A diffuse law $H$ of units is a \emph{$\rho$-hub at level $\bar c$ for $Q$} if, for independent $V,V'\sim H$ and $U\sim Q$, both $\PR(g(V,U)\ge\bar c)\ge\rho$ and $\PR(g(V,V')\ge\bar c)\ge\rho$. Let $\rho_Q(\bar c)$ be the supremum of such $\rho$; it is non-increasing in $\bar c$.
\end{definition}

For unquantized cosine similarity on the full unit sphere in dimension $d\ge2$, $\rho_Q(\bar c)$ lies between the largest strict and non-strict spherical-cap masses of $Q$ at $\bar c$ (Appendix~\ref{app:lower}, Remark (e)). We model identities as drawn from a diffuse population, so that $Q$ is the image law of a random identity and a hub unit is an identity of
its own. Write $\mathcal L_{Q,H}$ for the law of $m+2$ units in which one unit, at a uniformly random position, is drawn from $H$ and the others i.i.d.\ from $Q$; it is exchangeable, so it satisfies Assumption~\ref{ass:exch}.

\begin{theorem}[calibration-only thresholds]\label{thm:lower}
Let $\hat\tau=T(U_1,\dots,U_m,\xi)$ be any threshold computed from the calibration units and independent randomization $\xi$, used as ``accept iff $S_{e^\star}\ge\hat\tau$''. (a) If $\PR(\text{accept})\le\alpha$ under every $\mathcal L_{Q,H}$ with $H$ a $\rho$-hub for $Q$ at level $\bar c$, then for every $c\le\bar c$ ($c<\bar c$ if the rule accepts iff $S_{e^\star}>\hat\tau$),
\[
\PR_Q(\hat\tau\le c)\;\le\;\frac{(m+2)\,\alpha}{2\rho}.
\]
(b) If $0<\alpha<\rho$ and instead the calibration-conditional FAR, $\FAR(\hat\tau\mid U_{1:m},\xi)$, must exceed $\alpha$ with probability at most $\delta$ under $Q^{\otimes m}$ contaminated
i.i.d.\ by $H$ at every rate $\pi\in(0,1)$, then $\PR_Q(\hat\tau\le c)\le\delta\,(1-\alpha/\rho)^{-m/2}$, which is $\delta\exp\{m\alpha/(2\rho)\,(1+O(\alpha/\rho))\}$.
\end{theorem}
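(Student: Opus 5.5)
Part (a) goes by conditioning on where the hub unit sits under $\mathcal L_{Q,H}$. The position is uniform on $\{1,\dots,m+2\}$, so with probability $2/(m+2)$ it is one of the two trial positions $m+1,m+2$; on that event the calibration units $U_{1:m}$ are i.i.d.\ $Q$. They are independent of the trial, and so $\hat\tau=T(U_{1:m},\xi)$ has exactly its law under $Q^{\otimes m}$. Write $A$ for this event and $V\sim H$, $U\sim Q$ for the two independent trial units. Then
\[
\alpha\;\ge\;\PR(\text{accept})\;\ge\;\frac{2}{m+2}\,\PR\bigl(g(V,U)\ge\hat\tau\bigr),
\]
because $g$ is symmetric, so it does not matter which of the two trial positions holds the hub. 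For $c\le\bar c$,
\[
\PR\bigl(g(V,U)\ge\hat\tau\bigr)\;\ge\;\PR\bigl(\hat\tau\le c,\;g(V,U)\ge\bar c\bigr)\;=\;\PR_Q(\hat\tau\le c)\,\PR\bigl(g(V,U)\ge\bar c\bigr)\;\ge\;\rho\,\PR_Q(\hat\tau\le c),
\]
where the equality holds because $\hat\tau$ and $(V,U)$ are independent, and the last step is the first condition in Definition~\ref{def:hub}. Rearranging gives $\PR_Q(\hat\tau\le c)\le (m+2)\alpha/(2\rho)$. For the strict rule "accept iff $S_{e^\star}>\hat\tau$" the same argument works: for $c<\bar c$, the events $\hat\tau\le c$ and $g\ge\bar c$ together force $g>\hat\tau$. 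Only the hub–$Q$ condition is used here. The hub–hub condition $\PR(g(V,V')\ge\bar c)\ge\rho$ is not needed in (a); it is there for part (b) and for the tightness remark.

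For part (b), I would use the contamination model at rate $\pi$: each calibration unit is independently $H$ with probability $\pi$ and $Q$ otherwise, and the trial units are drawn from the same mixture. On the event $E$ that all $m$ calibration units come from $Q$, which has probability $(1-\pi)^m$, the threshold $\hat\tau$ has its $Q^{\otimes m}$ law. The impostor trial law is $(1-\pi)Q+\pi H$ for each trial unit. The probability that at least one trial unit is a hub is $1-(1-\pi)^2$; by the first hub condition a hub–$Q$ pair scores $\ge\bar c$ with probability at least $\rho$, and by the second a hub–hub pair does too. So on $\{\hat\tau\le c\}$ the conditional FAR is at least $\rho\bigl(1-(1-\pi)^2\bigr)$. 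I would choose $\pi$ so that $\rho(1-(1-\pi)^2)>\alpha$, for instance $(1-\pi)^2$ just below $1-\alpha/\rho$; this is possible since $\alpha<\rho$. Then on $E\cap\{\hat\tau\le c\}$ the conditional FAR exceeds $\alpha$, which gives
\[
\delta\;\ge\;(1-\pi)^m\,\PR_Q(\hat\tau\le c).
\]
Letting $(1-\pi)^2\uparrow 1-\alpha/\rho$ yields $\PR_Q(\hat\tau\le c)\le\delta(1-\alpha/\rho)^{-m/2}$. The stated asymptotic form follows from $-\log(1-x)=x+O(x^2)$.

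The main obstacle is the boundary handling in (b). I need the strict inequality "FAR exceeds $\alpha$" to hold for the chosen $\pi$ and still reach the limit $1-\alpha/\rho$. I also need care with $\rho$ as a supremum: either $H$ attains $\rho$ or I take a sequence of near-hubs and pass to the limit. Everything else is a direct conditioning argument.
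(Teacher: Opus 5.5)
Your proposal is correct and follows the paper's proof essentially step for step. In (a) you condition on the hub occupying a trial position (probability $2/(m+2)$, leaving the calibration units i.i.d.\ $Q$ and independent of the trial). In (b) you contaminate at a rate just above $\pi^*=1-\sqrt{1-\alpha/\rho}$, lower-bound the conditional FAR of any threshold $\le c$ by $\rho(1-(1-\pi)^2)>\alpha$, restrict to the hub-free calibration event of probability $(1-\pi)^m$, and let $\pi\downarrow\pi^*$; your observation that the hub--hub condition is needed only in (b) also matches the paper's remark.
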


\emph{Proof idea.} (a) Under $\mathcal L_{Q,H}$ the hub is a trial unit with probability $2/(m+2)$; the calibration units are then i.i.d.\ $Q$, so $\hat\tau\le c$ with the same probability as under $Q$, independently of the trial, and the trial scores $\ge\bar c\ge\hat\tau$ with probability at least $\rho$. (b) Contamination just above rate $1-\sqrt{1-\alpha/\rho}$ makes every threshold $\le c$ exceed $\alpha$, while leaving the calibration set hub-free with probability $(1-\pi)^m$
(Appendix~\ref{app:lower}). \hfill$\square$

Theorem~\ref{thm:lower} is a minimax statement: it bounds what a calibration-only rule can \emph{guarantee} over all laws with hubs of strength $\rho$, not what happens for a particular encoder. With $\rho$ of order one, a rule that is valid under every identity-exchangeable law has a threshold at or below $\bar c$ so accepts trials scoring $\bar c$ on hub-free calibration sets with probability at most $(m+2)\alpha/(2\rho)$, and a PAC-valid one with probability at most
$\delta e^{m\alpha(1+O(\alpha/\rho))/(2\rho)}$, close to $\delta$ for $\alpha\ll2\rho/m$ and below one only for $\alpha<2\rho\log(1/\delta)/m$. Uniformly lowering the threshold to the hub level requires $\alpha$ of order
$\rho_Q(\bar c)/m$, rather than $1/m^2$, even with more images per calibration identity when hubness is a property of the identity (Appendix~\ref{app:lower}). For $\rho=1$ and even $m$ the bound is tight: split conformal prediction on
the $\lfloor m/2\rfloor$ disjoint calibration pairs, exchangeable with the trial pair, is valid and first able to accept when $\alpha\ge1/(\lfloor m/2\rfloor+1)$, where the bound reaches one; a randomized version attains the bound at smaller $\alpha$ (Appendix~\ref{app:lower}). A threshold that uses all pairs does almost as well and is
dominated by COVE:

\begin{proposition}[a valid calibration-only threshold]\label{prop:inductive}
For $0\le\alpha<1$, let $S_{(k)}$ be the $k$-th largest calibration pair score and $k_\alpha=\lfloor\alpha N\rfloor-2m$. Under Assumption~\ref{ass:exch}, the rule ``accept iff $S_{e^\star}>S_{(k_\alpha)}$'' (reject all if $k_\alpha<1$) has $\FAR\le\alpha$; it can accept for some score arrays iff $\alpha\ge(2m+1)/N$, and $(2m+1)/N<4/(m+2)$. At $\alpha=1$, where $k_\alpha$ exceeds
$\binom m2$, the rule accepts every trial.
\end{proposition}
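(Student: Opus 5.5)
The plan is to compare the inductive rule with the transductive rank of the trial pair among all $N$ augmented pairs, and then use exchangeability of $(U_1,\dots,U_{m+2})$.

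\textbf{Step 1: reduce acceptance to a rank event.} Among the $N$ pairs, the ones touching $m+1$ or $m+2$ other than $e^\star$ number $2m$. Call these the cross pairs. Let $R$ be the number of pairs $e\in\cP$ with $S_e\ge S_{e^\star}$, counting $e^\star$ itself. Suppose the rule accepts, i.e.\ $S_{e^\star}>S_{(k_\alpha)}$. Then at most $k_\alpha-1$ calibration pairs have score $\ge S_{e^\star}$. At most $2m$ cross pairs can, and $e^\star$ itself contributes one. Hence
\[
R\le k_\alpha-1+2m+1=\lfloor\alpha N\rfloor .
\]
So acceptance implies $R\le\lfloor\alpha N\rfloor$.

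\textbf{Step 2: bound the rank event.} I will show $\PR(R\le k)\le k/N$, using the symmetry of the pair array.
\begin{itemize}
\item Under Assumption~\ref{ass:exch}, for every permutation $\sigma$ of $\{1,\dots,m+2\}$ the array $(S_e)$ has the same law as $(S_{\sigma(e)})$, since $g$ is symmetric.
\item Because $S_n$ acts transitively on unordered pairs, $e^\star$ can be sent to a uniformly random pair. Then $R$ has the same law as $R_E$ for $E$ uniform on $\cP$, independent of the scores.
\item Define $R_e=\#\{f:S_f\ge S_e\}$. Deterministically, $\#\{e:R_e\le k\}\le k$: if $k+1$ pairs satisfied it, the one among them with the smallest score would have $R_e\ge k+1$.
\item Therefore $\PR(R\le k)\le k/N$, and so $\FAR\le\lfloor\alpha N\rfloor/N\le\alpha$.
\end{itemize}
Ties are handled by the "$\ge$" convention, so no randomization is needed.

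\textbf{Step 3: the three remaining claims.}
\begin{itemize}
\item \emph{When the rule can accept.} The rule is nontrivial iff $k_\alpha\ge1$, i.e.\ $\lfloor\alpha N\rfloor\ge 2m+1$, i.e.\ $\alpha\ge(2m+1)/N$. For the converse, take $k_\alpha\le\binom m2$ and a score array whose trial score exceeds all calibration scores; the rule then accepts.
\item \emph{The inequality.} $(2m+1)/N=2(2m+1)/((m+2)(m+1))$, and this is $<4/(m+2)$ iff $2m+1<2m+2$, which always holds.
\item \emph{The case $\alpha=1$.} Here $k_\alpha=N-2m=\binom m2+1$, which exceeds the number of calibration pairs. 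With the convention $S_{(k)}=-\infty$ for such $k$, the rule accepts every trial.
\end{itemize}

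I expect the main care to be in Step 2: justifying that the rank of $e^\star$ is uniform over pairs up to ties, which needs the transitivity of the permutation action on pairs. Everything else is counting.
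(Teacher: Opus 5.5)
Your proposal is correct and follows the paper's own proof. Acceptance forces $R(e^\star)\le (k_\alpha-1)+2m+1=\lfloor\alpha N\rfloor$, so the rule is dominated by COVE, and COVE's validity (Theorem~\ref{thm:valid}) then gives $\FAR\le\lfloor\alpha N\rfloor/N\le\alpha$; you re-derive that theorem's transitivity-plus-counting argument (Lemma~\ref{lem:count}) inline instead of citing it. The remaining checks are fine too; note only that the restriction $k_\alpha\le\binom m2$ in your converse is automatic for $\alpha<1$, since $\lfloor\alpha N\rfloor\le N-1$ gives $k_\alpha\le N-1-2m=\binom m2$.
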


\section{COVE: a finite-sample transductive test}\label{sec:cove}

Theorem~\ref{thm:lower} leaves one way out: look at the trial's units themselves. For a trial with score $s^\star=S_{e^\star}$, let
\begin{align*}
R \;=\;& 1+\#\{e\subset[m]: S_e\ge s^\star\}\\
&+\textstyle\sum_{i\le m}\big(\ind\{S_{i,m+1}\ge s^\star\}+\ind\{S_{i,m+2}\ge s^\star\}\big)
\end{align*}
(the trial itself, calibration pairs, and cross pairs) be the rank of the trial among all $N$ pairs of the augmented set, ties counted against it, and $p=R/N$. COVE accepts iff $p\le\alpha$.

\begin{theorem}[finite-sample validity]\label{thm:valid}
Let $g$ be symmetric. Under Assumption~\ref{ass:exch}, for every $t\in[0,1]$, $\PR_{H_0}(p\le t)\le\lfloor tN\rfloor/N$, with equality if the $N$ pair scores are almost surely distinct. Hence $\FAR\le\lfloor\alpha N\rfloor/N\le\alpha$ for every law satisfying Assumption~\ref{ass:exch}.
\end{theorem}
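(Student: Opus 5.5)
The plan is the standard conformal rank argument, applied to edges of the complete graph on $m+2$ vertices instead of to sample points. Write $R_e$ for the rank of pair $e$ among all $N$ pair scores, ties counted against it:
\[
R_e=\#\{f\in\cP: S_f\ge S_e\}.
\]
Then $R=R_{e^\star}$, because the three groups in the definition of $R$ (the trial itself, the calibration pairs, and the cross pairs) together enumerate all of $\cP$.

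\textbf{Step 1: the trial's rank is distributed like a uniformly random pair's rank.} Fix a pair $e=\{i,j\}$ and take a transposition-type permutation $\sigma$ of $[m+2]$ that maps $\{i,j\}$ onto $\{m+1,m+2\}$. Since $g$ is symmetric, relabelling the units by $\sigma$ permutes the array $(S_f)_{f\in\cP}$ by the induced bijection on $\cP$. Rank statistics are equivariant under such relabelling, so $R_e(U)=R_{e^\star}(U\circ\sigma)$. Under Assumption~\ref{ass:exch}, $U\circ\sigma$ has the same law as $U$. If $g$ depends on auxiliary independent data, I condition on that data first; it does not affect exchangeability. Hence $R_e$ has the same law as $R_{e^\star}$ for every $e$. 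Averaging over $e$ gives
\[
\PR(R_{e^\star}\le k)=\frac{1}{N}\,\E\,\#\{e:R_e\le k\}.
\]
This is the step where symmetry of $g$ is essential. Without it, a pair $\{i,j\}$ would carry two scores and the pairwise array would not transform as an unordered-edge array. I expect this to be the only delicate point, and it only requires care in writing the induced action on $\cP$.

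\textbf{Step 2: deterministic counting.} I claim that for any real array $(s_f)_{f\in\cP}$ and any integer $k$,
\[
\#\{e:R_e\le k\}\le k.
\]
Suppose $K$ pairs have $R_e\le k$, and let $e_0$ be one of them with minimal score. Every one of those $K$ pairs has score at least $s_{e_0}$, so $R_{e_0}\ge K$. Since also $R_{e_0}\le k$, we get $K\le k$.

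If the scores are distinct, the ranks form a permutation of $\{1,\dots,N\}$, so equality holds, and hence $\PR(R=r)=1/N$ for each $r$.

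\textbf{Step 3: conclude.} The event $p\le t$ is the event $R\le\lfloor tN\rfloor$. Combining Steps 1 and 2 gives
\[
\PR_{H_0}(p\le t)\le\lfloor tN\rfloor/N,
\]
with equality in the almost-surely-distinct case. Setting $t=\alpha$ gives $\FAR=\PR_{H_0}(p\le\alpha)\le\lfloor\alpha N\rfloor/N\le\alpha$.
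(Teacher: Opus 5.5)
Your proposal is correct and follows the paper's proof essentially step for step. The paper likewise uses symmetry of $g$ to show $R(e^\star)\overset{d}{=}R(\sigma(e^\star))$ under relabelling, and bounds $\#\{e:R(e)\le k\}\le k$ with the same minimal-score argument (Lemma~\ref{lem:count}); the only cosmetic difference is that it averages over pairs by drawing $\sigma$ uniformly and independently of the units, whereas you pick an explicit involution for each pair and average over $e$ directly, which is equivalent.
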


\emph{Proof sketch.} Relabelling the units permutes the pairs without changing the law of the score array, and the permutations act transitively on pairs; the trial pair is therefore distributed like a uniformly random pair, and at most $k$ pairs of any array have rank $\le k$ (Appendix~\ref{app:valid}). \hfill$\square$

The guarantee is marginal, over the calibration draw and the trial, as in split conformal prediction. The rank argument is the one behind conformal inference for jointly exchangeable arrays \citep{marandon2024conformal,lunde2026dyadic}; what is specific here is its use as a verification test and its contrast with
Theorem~\ref{thm:lower}. Symmetry ensures validity for this unordered-pair construction; without it the guarantee can fail: with $g(u,v)=u-v$ and $m=1$ the trial is the top pair with probability $5/12>1/3$; a cohort normalization applied to one side only is of this kind. Three consequences follow.

\emph{Resolution.} COVE can first accept at $\alpha=1/N\approx2/m^2$ when its score is uniquely largest; ties can delay acceptance. This is the resolution the pair count suggests but calibration-only rules cannot guarantee.

\emph{Dominance.} If $S_{e^\star}>S_{(k_\alpha)}$, at most $k_\alpha-1$
calibration pairs and $2m$ cross pairs score $\ge s^\star$, so $R\le\lfloor\alpha N\rfloor$: COVE accepts every trial that the U-conformal rule of Proposition~\ref{prop:inductive} accepts, which proves that proposition.

\emph{Per-probe control.} For almost every probe value $u$, hub or not, the $m+1$ scores of $u$ against the calibration units and the reference are exchangeable, and COVE accepts only if the trial score is among the top $\lfloor\alpha N\rfloor$ of them. The FAR conditional on the probe is therefore at most $\lfloor\alpha N\rfloor/(m+1)\le(m+2)\alpha/2$
(Proposition~\ref{prop:probe}), whereas a calibration-only threshold can accept a hub probe's impostor trials with probability up to $\rho$. The bound is useful when $m\alpha$ is small, and its factor $(m+2)/2$ is the one in Theorem~\ref{thm:lower}(a).

\subsection{Algorithm, cost, and exact arithmetic}\label{sec:algo}

\begin{algorithm}[t]
\caption{COVE for one trial $(x,x')$}\label{alg:cove}
\begin{algorithmic}[1]
\REQUIRE quantized embeddings $U_1,\dots,U_m$ of $m$ calibration identities (one
image each); their pair scores sorted in decreasing order $S_{(1)}\ge\dots\ge S_{(\binom m2)}$;
a symmetric score $g$ (the same $g$ for all pairs); level $\alpha$
\STATE $k\gets\lfloor\alpha\binom{m+2}{2}\rfloor$; \textbf{if} $k=0$ \textbf{return} reject
\STATE $s^\star\gets g(x,x')$
\STATE $R\gets1+\#\{j:S_{(j)}\ge s^\star\}$ \hfill(binary search)
\STATE $R\gets R+\#\{i:g(x,U_i)\ge s^\star\}+\#\{i:g(x',U_i)\ge s^\star\}$
\STATE \textbf{return} accept if $R\le k$, else reject
\end{algorithmic}
\end{algorithm}

With a raw score the per-trial cost is $O(md+\log m)$; cohort-normalized scores
also need $O(|Z|d)$ work per new unit unless its cohort statistics are cached.
Calibration costs $O(m^2d+m^2\log m)$ once, plus cohort preprocessing when used. Theorem~\ref{thm:valid}
concerns a \emph{fixed} symmetric function, and floating-point matrix products are
not one: the same pair can round differently depending on summation order. We
therefore quantize every $\ell_2$-normalized embedding $f$ once to
$q=\mathrm{round}(\lambda f)$ with $\lambda=(4095-\sqrt d/2)/M$, where
$M=\max_i\|f_i\|_2$ is computed from the fixed feature cache before any calibration
draw, so that $\|q\|^2<2^{24}$; by
Cauchy--Schwarz every partial sum of a dot product is then an integer below $2^{24}$,
computed exactly in single precision by standard FP32 matrix-product kernels whatever the
summation order (a canary checks this before each run). The integer inner product
$q_u^\top q_v\approx\lambda^2\cos(f_u,f_v)$ is the score the guarantee is about
(Appendix~\ref{app:exact}).

\subsection{Cohort normalization with a guarantee}\label{sec:cohort}

Speaker and face verification normalize a trial score by the impostor score distribution of each side against a cohort \citep{auckenthaler2000score}, and
cross-modal retrieval rescales similarities to penalize hubs \citep{conneau2018word,bogolin2022qbnorm,pan2025dualbank}. These heuristics are
used because raw thresholds are unreliable; their cited formulations do not supply the distribution-free marginal FAR guarantee studied here. Let $Z$ be a
cohort of units from identities disjoint from calibration and deployment. Two symmetric scores that use it are
\begin{align*}
g_{\mathrm{csls}}(u,v) &= 2L\,g_0(u,v)-T_L(u)-T_L(v),\\
g_{\mathrm{snorm}}(u,v) &= \tfrac12\Big[\tfrac{g_0(u,v)-\mu_Z(u)}{\sigma_Z(u)}+\tfrac{g_0(u,v)-\mu_Z(v)}{\sigma_Z(v)}\Big],
\end{align*}
where $g_0$ is the raw integer score, $T_L(u)$ is the sum of $u$'s $L=10$ largest scores against $Z$ (cross-domain similarity local scaling, CSLS, in integer form), and
$\mu_Z(u),\sigma_Z(u)$ are the mean and standard deviation of all of $u$'s scores against $Z$ (symmetric normalization, S-norm).
Given $Z$, each is a fixed symmetric function of the pair, so Theorem~\ref{thm:valid} applies:

\begin{corollary}\label{cor:cohort}
If the calibration and trial units are exchangeable conditionally on $Z$ under $H_0$, COVE with $g_{\mathrm{csls}}$, $g_{\mathrm{snorm}}$, or any other symmetric cohort-based
score has $\FAR\le\lfloor\alpha N\rfloor/N$.
\end{corollary}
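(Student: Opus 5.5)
The plan is to condition on the cohort $Z$ and apply Theorem~\ref{thm:valid} conditionally, then integrate out $Z$. Fix a realisation $z$ of the cohort. For each of the scores in question, the map $(u,v)\mapsto g_z(u,v)$ is a fixed measurable function of the pair once $z$ is fixed. For $g_{\mathrm{csls}}$ it involves $g_0(u,v)$ and the cohort statistics $T_L(u),T_L(v)$ computed against $z$. For $g_{\mathrm{snorm}}$ it involves $g_0(u,v)$, $\mu_z(\cdot)$ and $\sigma_z(\cdot)$. I would first check that each is symmetric. For CSLS, $g_0$ is symmetric and the two correction terms enter additively in $u$ and $v$ with equal weight. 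For S-norm, the two bracketed terms swap under $u\leftrightarrow v$. For a general ``symmetric cohort-based score'' symmetry is the hypothesis itself.

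I would also verify measurability. The map $u\mapsto T_L(u)$ is a sum of order statistics of finitely many measurable functions $g_0(u,z_k)$, hence measurable. The maps $u\mapsto\mu_z(u)$ and $u\mapsto\sigma_z(u)$ are measurable too. Where $\sigma_z(u)=0$, I would adopt a fixed convention (for instance a floor on $\sigma$) that keeps the map measurable and symmetric. This convention is the one technical point I expect to need care, though it is routine.

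Next, the conditional step. By hypothesis, under $H_0$ and conditionally on $Z=z$, the vector $(U_1,\dots,U_{m+2})$ is exchangeable for almost every $z$. The trial's units also belong to two distinct identities absent from calibration, and disjointness from $Z$ is assumed. So Assumption~\ref{ass:exch} holds for the conditional law given $Z=z$, and the pair score $g_z$ is a fixed symmetric function under that law. Theorem~\ref{thm:valid} then yields
\[
\PR_{H_0}(p\le\alpha\mid Z=z)\le\lfloor\alpha N\rfloor/N
\]
for almost every $z$. To make this rigorous I would use a regular conditional distribution of the units given $Z$, which exists because the units live in a Euclidean (embedding) space. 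Taking expectations over $Z$ via the tower property gives $\FAR=\E[\PR_{H_0}(p\le\alpha\mid Z)]\le\lfloor\alpha N\rfloor/N$.

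The main obstacle is making sure that conditioning on $Z$ genuinely leaves a \emph{fixed} symmetric function. The score must not depend on the units through anything other than the pair being scored. For instance, $\mu_Z$ and $T_L$ must be computed against $Z$ only, never against other calibration units. If they also used other calibration units, relabelling the units would change the function and the transitivity argument in the proof of Theorem~\ref{thm:valid} would break. I would state this explicitly as the reason an independent cohort is required.

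With exact integer arithmetic (Section~\ref{sec:algo}), the computed scores coincide with this fixed function, so no floating-point asymmetry enters.
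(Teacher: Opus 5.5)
Your proposal is correct and is the paper's argument: given $Z$, each cohort score is a fixed symmetric function of the pair, so Theorem~\ref{thm:valid} applies under the conditional law, and the bound survives averaging over $Z$. The paper states this in one line, with Remark~(ii) after the proof of Theorem~\ref{thm:valid}; your checks on measurability, the $\sigma_Z(u)=0$ convention and regular conditional distributions fill in details it leaves implicit.
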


Normalization then changes \emph{which} trials are accepted while preserving the FAR upper bound, so it may be chosen for power using data independent of the calibration and trial units
(such as $Z$ itself). One-sided normalizations, and querybank schemes that normalize only the query, are not symmetric and are not covered.

\section{Power and the zoo index}\label{sec:power}

Let $\bar G(s)=\PR(S_{12}\ge s)$ be the impostor tail and $h(u;s)=\PR(g(u,U')\ge s)$ the impostor rate of a \emph{specific} unit $u$ against a unit of a random other identity. For a genuine trial $(u_a,u_b)$ with score
$s$, independent of i.i.d.\ calibration units,
\begin{align*}
\mu:=\frac{\E[R\mid\text{trial}]}{N}=\bar G(s)+\frac{1-\bar G(s)+m\,\omega(s)}{N},\\
\omega(s)=h(u_a;s)+h(u_b;s)-2\bar G(s).
\end{align*}
The oracle global threshold accepts iff $\bar G(s)\le\alpha$. COVE's expected rank has a finite-rank self-count correction $(1-\bar G(s))/N$ and an excess-rate term
$m\omega(s)/N\approx2\omega(s)/m$; its realised rank also varies with the calibration draw.

Calibration noise enters through the calibration exceedance fraction: for a fixed $\tau$, $\hat U(\tau)=\binom m2^{-1}\sum_{e\subset[m]}\ind\{S_e\ge\tau\}$ is a U-statistic of
order two.

\begin{proposition}[zoo index; a restatement of \citealp{hoeffding1948class}]\label{prop:zoo}
For i.i.d.\ units, with $\zeta_1=\mathrm{Var}\,h(U;\tau)$ and $\zeta_2=\bar G(\tau)(1-\bar G(\tau))$,
$\mathrm{Var}\,\hat U(\tau)=\big(4(m-2)\zeta_1+2\zeta_2\big)/\big(m(m-1)\big)$. With the \emph{zoo index} $\kappa=\zeta_1/\bar G(\tau)^2$, the
squared coefficient of variation of per-unit impostor rates, the relative standard error at $\bar G(\tau)=\alpha$ is
$\sqrt{4(m-2)\kappa/(m(m-1))+2(1-\alpha)/(m(m-1)\alpha)}$.
\end{proposition}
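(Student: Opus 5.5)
The plan is to write $\hat U(\tau)=\binom m2^{-1}\sum_{e\subset[m]}X_e$ with $X_e=\ind\{S_e\ge\tau\}$ and expand its variance into covariances over ordered pairs of index pairs $(e,e')$. With i.i.d.\ units and a symmetric $g$, the covariance $\mathrm{Cov}(X_e,X_{e'})$ depends only on the overlap $|e\cap e'|\in\{0,1,2\}$. This is the standard Hoeffding decomposition for a U-statistic of order two with kernel $X_{\{i,j\}}$.

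The three overlap cases are handled separately.
\begin{itemize}
\item \emph{Disjoint pairs} ($|e\cap e'|=0$). $X_e$ and $X_{e'}$ are independent, so the covariance is zero.
\item \emph{Identical pairs} ($e=e'$). $X_e$ is Bernoulli with mean $\bar G(\tau)$, so its variance is $\zeta_2=\bar G(\tau)(1-\bar G(\tau))$.
\item \emph{One shared index}, say $e=\{i,j\}$ and $e'=\{i,k\}$ with $j\ne k$. Condition on $U_i$: the indicators are conditionally independent, each with conditional mean $h(U_i;\tau)$, using symmetry of $g$ so that $h$ is the same from either side. The covariance is therefore $\E h(U;\tau)^2-\bar G(\tau)^2=\mathrm{Var}\,h(U;\tau)=\zeta_1$.
\end{itemize}

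Next I count the terms. Among ordered pairs $(e,e')$ there are $\binom m2$ diagonal terms. The ordered pairs sharing exactly one index number $\binom m2\cdot2(m-2)$: choose $e$, choose which endpoint is shared, then choose a new partner among the remaining $m-2$ indices. This gives
\[
\mathrm{Var}\,\hat U(\tau)=\binom m2^{-1}\big(\zeta_2+2(m-2)\zeta_1\big).
\]
Substituting $\binom m2^{-1}=2/(m(m-1))$ yields $\big(4(m-2)\zeta_1+2\zeta_2\big)/\big(m(m-1)\big)$, as stated.

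For the relative standard error, divide the variance by $\E\hat U(\tau)^2=\bar G(\tau)^2$. Write $\zeta_1=\kappa\bar G^2$, and note $\zeta_2/\bar G^2=(1-\bar G)/\bar G$. Setting $\bar G(\tau)=\alpha$ and taking a square root gives the displayed expression.

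The only delicate step is the one-shared-index covariance: one must check that conditional independence given the shared unit holds and that the conditional mean is $h(U_i;\tau)$ regardless of which slot $U_i$ occupies. This relies on the units being i.i.d.\ (not merely exchangeable) and on $g$ being symmetric. Everything else is bookkeeping.
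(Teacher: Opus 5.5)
Your proposal is correct and is essentially the paper's argument. The paper cites Hoeffding's variance formula $\mathrm{Var}\,\hat U=\binom m2^{-1}\big[2(m-2)\zeta_1+\zeta_2\big]$ for the order-two U-statistic with kernel $\ind\{g(u,v)\ge\tau\}$, identifies $\zeta_1=\mathrm{Var}\,h(U;\tau)$ and $\zeta_2=\bar G(\tau)(1-\bar G(\tau))$, and divides by $(\E\hat U(\tau))^2=\bar G(\tau)^2$ (note: not $\E[\hat U(\tau)^2]$, as your notation $\E\hat U(\tau)^2$ could be read); your overlap-by-overlap covariance count simply derives that cited formula explicitly.
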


A binomial model for $\binom m2$ independent pairs predicts relative standard error $\sqrt{2(1-\alpha)/(m(m-1)\alpha)}$; the ratio is $\sqrt{1+2(m-2)\kappa\alpha/(1-\alpha)}$,
which \emph{grows} with the number of identities. Pairs are worth $m^2/2$ independent trials only while $\kappa\ll1/(m\alpha)$; otherwise about $m/(4\kappa\alpha)$. The
biometric ``zoo'' \citep{doddington1998sheep} is exactly heterogeneity in $h$, and that pair error rates are U-statistics is known \citep{conti2024assessing,fogliato2023confidence};
$\kappa$ turns this into a measurable effective-sample-size correction, the two-way analogue of the design effect for clustered calibration data \citep{noonan2026exceedance}.

\begin{proposition}[power]\label{prop:power}
Let calibration units be i.i.d., drawn from identities other than the trial's, and independent of the genuine trial. Let $k=\lfloor\alpha N\rfloor$ and
$v=2\,\mathrm{Var}\,\hat U(s)+4m\,(h(u_a;s)+h(u_b;s))/N^2$. For every $\eta>0$ and every event $B$ of genuine trials on which $\mu\le k/N-\eta\alpha$,
\[
\TAR_{\mathrm{COVE}}\ \ge\ \PR_{\mathrm{gen}}(B)-\E_{\mathrm{gen}}\big[\ind_B\min\{1,\,v/(\eta\alpha)^2\}\big].
\]
\end{proposition}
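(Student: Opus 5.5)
Conditionally on the genuine trial, write $R=1+X+Y$. Here $X=\#\{e\subset[m]:S_e\ge s\}=\binom m2\hat U(s)$ and $Y=\sum_{i\le m}(\ind\{g(u_a,U_i)\ge s\}+\ind\{g(u_b,U_i)\ge s\})$. COVE accepts iff $R\le k$, i.e.\ iff $R/N\le k/N$. On $B$ we have $\E[R\mid\text{trial}]/N=\mu\le k/N-\eta\alpha$. Rejection therefore requires $R/N-\mu\ge\eta\alpha$, and Chebyshev's inequality conditional on the trial gives
\[
\PR(\text{reject}\mid\text{trial})\le\min\{1,\mathrm{Var}(R/N\mid\text{trial})/(\eta\alpha)^2\}.
\]
Since $\TAR_{\mathrm{COVE}}\ge\E_{\mathrm{gen}}[\ind_B\PR(\text{accept}\mid\text{trial})]=\PR_{\mathrm{gen}}(B)-\E_{\mathrm{gen}}[\ind_B\PR(\text{reject}\mid\text{trial})]$, it suffices to show $\mathrm{Var}(R/N\mid\text{trial})\le v$ on the event where the trial score is $s$.

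For the variance, use $\mathrm{Var}(X+Y)\le2\mathrm{Var}X+2\mathrm{Var}Y$. Calibration is i.i.d.\ and independent of the trial, so conditionally $X/\binom m2=\hat U(s)$ is exactly the U-statistic of Proposition~\ref{prop:zoo}, evaluated at the fixed level $\tau=s$. Because $\binom m2\le N$, this gives $2\mathrm{Var}(X)/N^2\le2\mathrm{Var}\,\hat U(s)$. Next, $Y=\sum_i W_i$ with $W_i=\ind\{g(u_a,U_i)\ge s\}+\ind\{g(u_b,U_i)\ge s\}$. The $W_i$ are i.i.d.\ given the trial, so $\mathrm{Var}Y=m\mathrm{Var}W_1\le m\E W_1^2$. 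Since $W_1\in\{0,1,2\}$, we have $W_1^2\le2W_1$, and hence $\E W_1^2\le2(h(u_a;s)+h(u_b;s))$. Thus $2\mathrm{Var}(Y)/N^2\le4m(h(u_a;s)+h(u_b;s))/N^2$, and the two pieces sum to $v$.

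The main obstacle is the bookkeeping of conditioning. The functions $h(u_a;s)$ and $h(u_b;s)$ must be the same impostor rates that appear in $\mu$; this holds because calibration identities differ from the trial's. The Chebyshev step must also be applied conditionally on the trial, so that $\mu$, $v$ and $B$ are all trial-measurable before integrating over $\PR_{\mathrm{gen}}$. A second point is whether the $\mu$ stated in the paper matches $\E[R]/N$ exactly. Here $\E[X]=\binom m2\bar G(s)$ and $\E[Y]=m(h_a+h_b)$, so
\[
\E R=1+\binom m2\bar G+m(h_a+h_b)=N\bar G+(1-\bar G)+m\omega,
\]
using $N=\binom m2+2m+1$. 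This confirms the formula for $\mu$.
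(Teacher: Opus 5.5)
Your proof is correct and follows the paper's argument. Both condition on the trial, write $R=1+\binom m2\hat U(s)+X_a+X_b$, split the variance with $(x+y)^2\le 2x^2+2y^2$ and $\binom m2\le N$, and apply Chebyshev on $B$. The only cosmetic difference is the cross-pair term: you use the i.i.d.\ summands $W_i\in\{0,1,2\}$ with $W_1^2\le 2W_1$, whereas the paper uses $\mathrm{Var}(X_a+X_b)\le 2\,\mathrm{Var}X_a+2\,\mathrm{Var}X_b$ with binomial variances; both give $4m(h(u_a;s)+h(u_b;s))/N^2$.
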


For fixed $\eta\in(0,1/2)$, if $\alpha m^2\ge8/\eta$, taking $B=\{\bar G(s)\le\alpha(1-2\eta),\ \Delta\le\eta m\alpha/4\}$, with $\Delta$ the positive part of the excess rate $\omega(s)$, gives
$\TAR_{\mathrm{COVE}}\ge\TAR_{\mathrm{oracle}}(\alpha(1-2\eta))-\PR_{\mathrm{gen}}(\Delta>\eta m\alpha/4)-\varepsilon$, where $\varepsilon\to0$ when $\alpha m^2\to\infty$ and $m^{-1}\sup_{\bar G(s)\le\alpha}\zeta_1(s)/\alpha^2\to0$
(Appendix~\ref{app:power}). Under the stated conditions, the bound attributes losses relative to the oracle threshold to large excess impostor rates at the operating point and calibration variability.
The bound is one-sided. COVE is not a threshold on $s$: it adapts to the calibration draw, so the best global threshold at the same marginal level is not an upper bound on its
power, and COVE can exceed it by accepting more in draws whose calibration identities are less hub-laden and fewer in the others (Q4 in Section~\ref{sec:experiments}).

\section{Experiments}\label{sec:experiments}

We ask: (Q1) is COVE's FAR at its level on real embeddings; (Q2) how much power does it keep where the valid calibration-only rules are trivial; (Q3) how do pair-based calibration recipes fare against their nominal targets; (Q4) what drives COVE's refusals; (Q5) does cohort normalization help; (Q6) what happens when Assumption~\ref{ass:exch} fails?

\paragraph{Setup.} We use six public benchmarks with identity-disjoint calibration and test sides ImageNet validation classes \citep{russakovsky2015imagenet}, Stanford Online Products (SOP, \citealp{song2016deep}), CUB-200-2011 \citep{wah2011cub}, Stanford Cars \citep{krause2013cars}, DeepFashion In-shop \citep{liu2016deepfashion} and a GLDv2 landmark subset \citep{weyand2020gldv2}, and eight frozen encoders: DINOv2-S/B/L \citep{oquab2024dinov2}, CLIP-B/16 and L/14 \citep{radford2021clip}, SigLIP-B/16 \citep{zhai2023siglip}, ResNet-50 \citep{wightman2021resnet} and ConvNeXt-B
\citep{liu2022convnext}. A \emph{cell} is one (dataset, encoder, score, split, $m$) configuration. The core tier (ImageNet, SOP, CUB, Cars $\times$ DINOv2-B, CLIP-B/16, ResNet-50; $m\le10^4$) uses \NReps\ calibration draws per cell; the new-dataset tier (In-shop, GLDv2), the new-encoder tier (five encoders on ImageNet and SOP) and the cohort,
split-shift and PAC tiers use \NRepsOther. Each draw samples $m$ identities and one image of each, and \NImpPerDraw\ impostor and \NGenPerDraw\ genuine trials from the remaining identities, identity-uniformly as Assumption~\ref{ass:exch} requires (Appendix~\ref{app:protocol}); levels range over $\alpha\in[10^{-7},10^{-1}]$. Baselines, named as in all tables and figures: two valid calibration-only rules, \emph{U-conformal} (Proposition~\ref{prop:inductive}) and \emph{disjoint-pairs conformal} (split conformal on $\lfloor m/2\rfloor$ disjoint pairs); \emph{disjoint-pairs PAC}, the same pairs with a Clopper-Pearson order statistic, PAC for i.i.d.\ identity draws; three recipes that treat the $\binom m2$ pairs as independent, the \emph{empirical quantile}, the \emph{binomial UCB} (Clopper-Pearson upper confidence bound) over pairs and \emph{conformal over pairs} (COVE
without its cross pairs); the 95\% \emph{identity bootstrap} \citep{bolle2004subsets,fogliato2023confidence} ($B=200$, $m\le1000$); and the \emph{oracle} global threshold, evaluated at COVE's exact level. All rules see the same draws and trials; the PAC tier, which adds the identity bootstrap, repeats the first \NRepsOther\ draws of core cells. The study has \NCells\ cells and took \EvalGPUHours\ GPU-hours (Appendix~\ref{app:compute}).

\begin{figure*}[t]
\centering
\includegraphics[width=\textwidth]{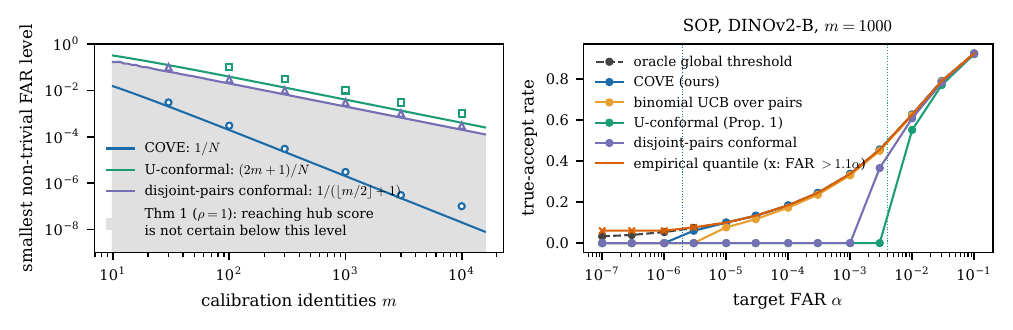}
\caption{\textbf{Identities, not pairs.} Left: smallest level at which each valid rule can accept for some score arrays (lines), and on SOP/DINOv2-B (markers: smallest grid level with non-zero TAR); below the shaded boundary Theorem~\ref{thm:lower} bounds the probability that a calibration-only threshold reaches the hub score when $\rho=1$. Right: true-accept rate against the target
level at $m=1000$; crosses mark levels at which the empirical quantile's realised FAR exceeds $1.1\alpha$; dotted lines mark $1/N$ and $(2m+1)/N$.}\label{fig:separation}
\end{figure*}

\paragraph{Q1: validity.} The random identity split samples calibration and trial identities from one finite pool on every draw, satisfying Assumption~\ref{ass:exch} by construction. Across \CoveValidityCells\ real-data (cell, level) comparisons with positive exact level, down to $\alpha=\CoveMinAlpha$, the median realised FAR divided by $\lfloor\alpha N\rfloor/N$ is
\CoveMedRatio, and \CoveNoneReject\ deviates significantly from its level (largest $|z|$ \CoveMaxAbsZ\ against the two-sided Bonferroni threshold \BonfZ; Figure~\ref{fig:evidence}a). This checks Theorem~\ref{thm:valid} end to end, quantization and sampling included; it does not certify a calibration-conditional FAR for a fixed deployment.

\paragraph{Q2: power where calibration-only rules give up.} The valid calibration-only rules cannot accept below $1/(\lfloor m/2\rfloor+1)$ and $(2m+1)/N$ (Figure~\ref{fig:separation}, left); COVE can accept from $1/N$ when its score is uniquely the largest. In the \NPowerCells\ (cell, level) combinations where both are trivial, COVE is not, and the sampled oracle is reliable (Appendix~\ref{app:protocol}), COVE's true-accept rate is a median \MainTARRatio\ of the sampled oracle's \OracleAtLevel\ (10th percentile \MainTARRatioQlo, minimum \CoveMinRatioPct\%; per dataset and encoder in Table~\ref{tab:breadth}, and one level per core pair in Table~\ref{tab:main}).

\begin{table*}[t]\centering\small\setlength{\tabcolsep}{4pt}
\caption{Core results, one level per core pair (random identity split; \NReps\ draws per cell,
the first \NRepsOther\ for the identity bootstrap); at each level both valid calibration-only rules
accept nothing. COVE: realised FAR over its level $\lfloor\alpha N\rfloor/N$, 95\% interval. TAR:
true-accept rates (oracle at COVE's level). Quantile: realised FAR$/\alpha$. Cond.\ FAR: \% of draws
whose conditional FAR exceeds $\alpha$ for the two rules claiming 95\% conditional control (nominal
5\%; ``--'': accepts nothing). Last column: \% of draws whose sampled FAR exceeds COVE's level
(COVE claims marginal control only).}\label{tab:main}
\begin{tabular}{llrrcccccccc}
\toprule
 & & & & COVE & \multicolumn{3}{c}{TAR} & quantile & \multicolumn{2}{c}{\% cond.\ FAR$>\alpha$} & \% FAR \\
\cmidrule(lr){6-8}\cmidrule(lr){10-11}
dataset & encoder & $m$ & $\alpha$ & FAR/level & COVE & oracle & id.\ boot. & FAR$/\alpha$ & bin.\ UCB & id.\ boot. & $>$level \\
\midrule
Cars & CLIP-B/16 & 60 & $10^{-3}$ & 1.03$\pm$0.08 & 0.074 & 0.080 & 0.085 & 1.3 & -- & 26 & 41 \\
Cars & DINOv2-B & 60 & $10^{-3}$ & 0.94$\pm$0.08 & 0.057 & 0.064 & 0.059 & 1.4 & -- & 20 & 32 \\
Cars & ResNet-50 & 60 & $10^{-3}$ & 0.99$\pm$0.09 & 0.015 & 0.016 & 0.017 & 1.4 & -- & 24 & 34 \\
CUB & CLIP-B/16 & 60 & $10^{-3}$ & 0.91$\pm$0.08 & 0.063 & 0.069 & 0.067 & 1.3 & -- & 21 & 34 \\
CUB & DINOv2-B & 60 & $10^{-3}$ & 0.97$\pm$0.09 & 0.167 & 0.178 & 0.169 & 1.2 & -- & 24 & 34 \\
CUB & ResNet-50 & 60 & $10^{-3}$ & 0.99$\pm$0.09 & 0.039 & 0.043 & 0.044 & 1.5 & -- & 24 & 36 \\
ImageNet & CLIP-B/16 & 300 & $10^{-4}$ & 1.00$\pm$0.05 & 0.072 & 0.074 & 0.048 & 1.2 & 3 & 10 & 42 \\
ImageNet & DINOv2-B & 300 & $10^{-4}$ & 1.00$\pm$0.05 & 0.270 & 0.286 & 0.166 & 1.1 & 1 & 4 & 43 \\
ImageNet & ResNet-50 & 300 & $10^{-4}$ & 0.98$\pm$0.05 & 0.293 & 0.310 & 0.198 & 1.1 & 2 & 4 & 40 \\
SOP & CLIP-B/16 & 1000 & $10^{-5}$ & 1.01$\pm$0.10 & 0.059 & 0.053 & 0.048 & 1.8 & 30 & 25 & 39 \\
SOP & DINOv2-B & 1000 & $10^{-5}$ & 0.96$\pm$0.06 & 0.100 & 0.102 & 0.071 & 1.1 & 7 & 5 & 45 \\
SOP & ResNet-50 & 1000 & $10^{-5}$ & 0.94$\pm$0.06 & 0.073 & 0.073 & 0.058 & 1.2 & 11 & 12 & 44 \\
\bottomrule
\end{tabular}
\end{table*}

\paragraph{Q3: pair-based calibration.} The empirical quantile has no distribution-free guaranty here; where $\alpha\binom m2\ge1$ its realized FAR exceeds $1.1\alpha$ in \NaiveFracOverResolved\% of cases and reaches \NaiveWorstRatioResolved$\,\alpha$. Conformal over pairs COVE without its cross pairs has a median \PairsIidTARRatioMed\ of COVE's true-accept
rate, but its realized FAR is significantly above $\alpha$ (one-sided $z>\BonfZ$) in \PairsIidRejectPct\% of comparisons, reaching \PairsIidFARMax$\,\alpha$: the $2m$ cross pairs buy validity at no power cost in the median. The binomial UCB claims calibration-conditional (PAC) control; its conditional FAR exceeds $\alpha$ in a median \NaiveCPMedSmallM\% of draws for
$m\le30$ but \NaiveCPMedLargeM\% for $m\ge3000$ (medians over cells and levels; worst \NaiveCPWorstPct\%; nominal 5\%), growing with $m$, consistent with Proposition~\ref{prop:zoo} (Figure~\ref{fig:evidence}b). The identity bootstrap is near its nominal 5\% in the median where the calibration set resolves the level ($\alpha\binom m2\ge10$: median \BootPACViolMedWell\%,
worst \BootPACViolMaxWell\%), exceeds it for $1\le\alpha\binom m2<10$ (median \BootPACViolMedMid\%), and fails where $\alpha\binom m2<1$ (median \BootPACViolMedUnder\%, FAR up to \BootFARRatioMaxUnder$\,\alpha$); where both are non-trivial it attains a median \BootTARvsCoveMed\ of COVE's true-accept rate. Disjoint-pairs PAC, exact for i.i.d.\ identity draws, stays near its target in our finite-pool draws (worst \DisjCPViolMax\% of draws) but is non-trivial in only \DisjCPNontrivCells\ of \DisjCPCombos\ (cell, level) combinations, where it attains a median \DisjCPTARvsCoveMed\ of COVE's true-accept rate. COVE claims only marginal control; its sampled FAR exceeds its level in a median \CoveDrawViolMed\% of draws (Section~\ref{sec:limitations}).

\begin{figure*}[t]\centering
\includegraphics[width=\textwidth]{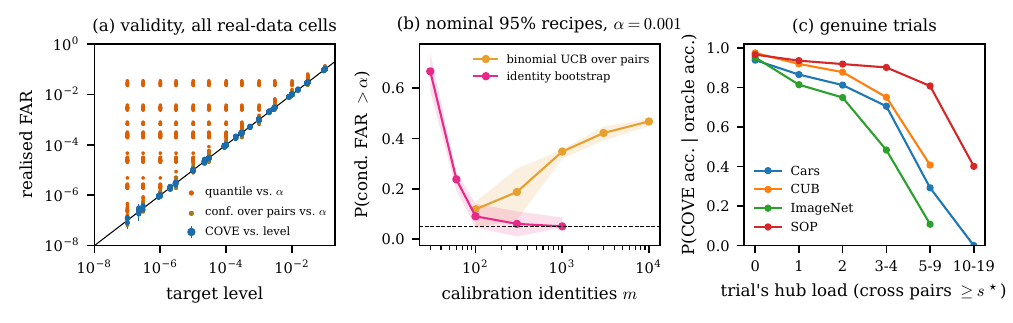}
\caption{\textbf{Evidence.} (a) Realised FAR against the target level for every random-split real-data cell and level (approximate 95\% intervals for the mean over draws): COVE against its level, the empirical quantile and the conformal p-value over pairs against $\alpha$; the hub model is in Appendix~\ref{app:breadth}. (b) Fraction of draws with estimated conditional FAR above $\alpha$
for nominal 95\% recipes, against $m$ (median and range over dataset/encoder pairs, $\alpha=10^{-3}$; nominal 5\% dashed). (c) Probability that COVE accepts a genuine trial the oracle accepts, by the trial's hub load, pooled over the encoders of each dataset; 50 draws, $m=60$, $\alpha=10^{-2}$ (CUB, Cars), $m=300$, $\alpha=3\cdot10^{-4}$ (ImageNet), $m=1000$,
$\alpha=10^{-4}$ (SOP).}\label{fig:evidence}
\end{figure*}

\paragraph{Q4: mechanism.}\label{sec:mechanism} Per-unit impostor rates are heterogeneous, increasingly so in the tail: over the eight dataset/encoder pairs we measured, the zoo index ranges from \KappaMinTwo--\KappaMaxTwo\ at $\alpha=10^{-2}$ to \KappaMinFour--\KappaMaxFour\ at $10^{-4}$, growing between these levels for every pair. Call the number of a trial's $2m$
cross pairs scoring at least $s^\star$ its \emph{hub load}. Among genuine trials the oracle accepts, COVE accepts \HubLoadZeroAcc\ of those with hub load 0 but \HubLoadThreeAcc\ of those with hub load at least 3 (Figure~\ref{fig:evidence}c): its refusal rate rises with hub load, consistent with Proposition~\ref{prop:power}. The sampled COVE true-accept rate exceeds the sampled oracle's in
\CoveBeatsOracleFrac\% of combinations (by up to \CoveBeatsOracleMaxPct\%); this is possible because COVE is not a global threshold on $s$ (Section~\ref{sec:power}). The largest directional cap occupancy over candidate directions (units, the mean direction, $k$-means centroids), a descriptive proxy for $\rho_Q$ rather than a bound, is \RhoMinFour--\RhoMaxFour\
($\approx$\RhoAlphaMinFour--\RhoAlphaMaxFour$\,\alpha$) at $\alpha=10^{-4}$.

\begin{table}[t]\centering\small\setlength{\tabcolsep}{2.5pt}
\caption{True-accept rate of COVE with the raw score and with two cohort normalizations
(Corollary~\ref{cor:cohort}), at the same guaranteed level, on the same \NRepsOther\
calibration draws and trials (random identity split).}\label{tab:cohort}
\begin{tabular}{llrrccc}
\toprule
data & encoder & $m$ & $\alpha$ & raw & CSLS & S-norm \\
\midrule
ImageNet & CLIP-B/16 & 300 & $10^{-4}$ & 0.073 & 0.092 & 0.095 \\
ImageNet & DINOv2-B & 300 & $10^{-4}$ & 0.272 & 0.257 & 0.254 \\
ImageNet & ResNet-50 & 300 & $10^{-4}$ & 0.294 & 0.313 & 0.299 \\
SOP & CLIP-B/16 & 1000 & $10^{-5}$ & 0.058 & 0.220 & 0.083 \\
SOP & CLIP-B/16 & 1000 & $10^{-4}$ & 0.128 & 0.402 & 0.190 \\
SOP & DINOv2-B & 1000 & $10^{-5}$ & 0.099 & 0.156 & 0.068 \\
SOP & DINOv2-B & 1000 & $10^{-4}$ & 0.184 & 0.378 & 0.209 \\
SOP & ResNet-50 & 1000 & $10^{-5}$ & 0.074 & 0.181 & 0.040 \\
SOP & ResNet-50 & 1000 & $10^{-4}$ & 0.138 & 0.356 & 0.116 \\
\bottomrule
\end{tabular}
\end{table}

\paragraph{Q5: cohort normalization.} Because normalization preserves COVE's FAR upper bound (Corollary~\ref{cor:cohort}), it can be judged on power alone. On the same draws and trials, CSLS with a 20\% held-out cohort raises COVE's true-accept rate by a median factor \CohortGainMedShown\ over the settings of Table~\ref{tab:cohort} (up to \CohortGainMax), but it
lowers it in \CslsWorseFrac\% of all cohort combinations (down to \CslsMinRatio$\times$), and S-norm lowers it in \SnormWorseFrac\%; the realised FAR stays at the level (Q1 includes these cells). Normalization should therefore be chosen on held-out data, which Corollary~\ref{cor:cohort} permits.

\paragraph{Q6: when identities are not exchangeable.}\label{sec:adverse} On SOP, calibrating on the official training products and testing on the official test products moves COVE's realised FAR to between \AdvFixedMin\ and \AdvFixedMax\ times its level; calibrating on six SOP super-categories and testing on the other six moves it to between \AdvCatMin\ and \AdvCatMax\
times (Table~\ref{tab:adverse}). The pair-based recipes also fail there: the empirical quantile's FAR reaches \AdvNaiveMax$\,\alpha$ and the binomial UCB violates its 95\% guarantee in up to \AdvCPMax\% of draws. On ImageNet the ``fixed'' split is one seeded random partition of classes, so its deviations show the spread induced by conditioning on a single partition rather than a population mismatch.

\paragraph{Breadth and cost.} On In-shop and GLDv2 and with five further encoders the conclusions are unchanged: validity as in Q1, and a median \TARRatioB\ (new datasets) and \TARRatioC\ (new encoders) of the oracle's true-accept rate (Appendix~\ref{app:breadth}). With the raw score a trial costs \UsPerTrialThousand\,$\mu$s at $m=1000$ and \UsPerTrialTenThousand\,$\mu$s at $m=10^4$ on
one GPU, amortized over batches of trials and including all baselines.

\section{Limitations}\label{sec:limitations}

\emph{Identity exchangeability.} Calibration and deployment identities must follow the same sampling protocol. The split-shift results (Q6) show that FAR control can fail otherwise. The assumption also fails if a probe identity occurs in calibration; Appendix~\ref{app:robust} proves a lower-power variant for a calibration match chosen independently of images, but we did not test it. The guarantee covers zero-effort impostors, not targeted look-alikes or presentation attacks.

\emph{Marginal control.} COVE controls FAR averaged over calibration draws, not the conditional FAR of a fixed draw. Proposition~\ref{prop:zoo} quantifies calibration tail-count variance at a fixed threshold; it does not bound COVE's
per-draw FAR. A conditional version remains open. Theorem~\ref{thm:lower}(b) also limits calibration-only PAC rules well below level $2\rho/m$.

\emph{Worst case and resolution.} The lower bound concerns a class of hub laws; its tightness for real encoders is unknown. The measured separation uses the achievable floors of the calibration-only baselines. Levels below $1/N\approx2/m^2$ are unreachable; $10^{-6}$ needs roughly $1.4\times10^3$ calibration identities.

\emph{Cost and scope.} Raw-score trials require $m$ stored-unit scores; cohort scores also require statistics against $Z$ for new units. We study one-to-one verification of objects, birds, cars, garments and landmarks with frozen encoders, not one-to-many identification, face or person verification. In-shop garment images show models; consent and fairness questions for
person verification lie outside our FAR guarantee.

\bibliography{references}

\subsection*{AI use statement}

We used generative AI tools to assist with language editing, readability improvements, and minor rephrasing of parts of the manuscript. All AI-assisted content was reviewed, edited, and verified by the authors. Generative AI tools were not used to formulate the research questions, develop the theoretical results, design the methodology, implement the proposed method, generate experimental data, or determine the scientific conclusions of this work. We take responsibility for the final content of this work, including all text, claims, analyses, and artifacts produced with the aid of generative AI tools.
We take responsibility for the final content of this work, including text, claims, or artifacts produced with the aid of generative AI.

% \subsubsection*{Acknowledgements}
% All acknowledgements go at the end of the paper, including thanks to reviewers who gave useful comments, to colleagues who contributed to the ideas, and to funding agencies and corporate sponsors that provided financial support.
% To preserve the anonymity, please include acknowledgements \emph{only} in the camera-ready papers. The acknowledgements do not count against the 9-page page limit in the camera-ready.

%%%%%%%%%%%%%%%%%%%%%%%%%%%%%%%%%%%%%%%%%%%%%%%%%%%%%%%%%%%%
\section*{Checklist}

\begin{enumerate}

\item For all models and algorithms presented, check if you include: 
\begin{enumerate} 
\item Yes. Sections 3-6 describe the mathematical setting, assumptions, decision rules, and the COVE algorithm. 
\item Yes. Section 5.1 discusses computational complexity and Section 4 characterizes sample-size and resolution limits. 
\item Yes. An anonymized implementation, dependencies, experimental scripts, and reproduction materials are included in the supplementary material. 
\end{enumerate}

\item For any theoretical claim, check if you include: 
\begin{enumerate} \item Yes. All theoretical results state their assumptions explicitly (e.g., Assumption 1 and theorem statements). \item Yes. Complete proofs are provided in Appendix A of the supplementary material. 
\item Yes. The assumptions and their practical implications are discussed in Sections 3, 5, 7, and 8. 
\end{enumerate}

\item For all figures and tables that present empirical results, check if you include: 
\begin{enumerate} 
\item Yes. Code, datasets, experimental protocols, audit procedures, and reproduction instructions are provided in the supplementary material. 
\item Yes. Appendix B describes datasets, encoders, splits, calibration protocols, hyperparameters, baselines, and evaluation procedures. 
\item Yes. Sections 7 and Appendix B define all reported metrics, confidence intervals, significance tests, and error bars. 
\item Yes. Computing infrastructure, GPU models, runtime, and memory usage are reported in Appendix E. 
\end{enumerate}

\item If you are using existing assets (e.g., code, data, models) or curating/releasing new assets, check if you include: \begin{enumerate} 
\item Yes. All datasets, pretrained models, and external assets are cited. 
\item Yes. License and usage information are summarized in Appendix D where available. 
\item Yes. The implementation and reproducibility artifacts are included in the supplementary material. 
\item Yes. Information provided by dataset creators and curators is summarized in Appendix D where applicable. 
\item Yes. Section 8 discusses limitations and ethical considerations. Dataset-specific considerations are documented in Appendix D. \end{enumerate}

\item If you used crowdsourcing or conducted research with human subjects, check if you include: 
\begin{enumerate} 
\item Not Applicable. 
\item Not Applicable. 
\item Not Applicable. 
\end{enumerate}
  % \item If you are using existing assets (e.g., code, data, models) or curating/releasing new assets, check if you include:
  % \begin{enumerate}
  %   \item Citations of the creator If your work uses existing assets. [Yes/No/Not Applicable]
  %   \item The license information of the assets, if applicable. [Yes/No/Not Applicable]
  %   \item New assets either in the supplemental material or as a URL, if applicable. [Yes/No/Not Applicable]
  %   \item Information about consent from data providers/curators. [Yes/No/Not Applicable]
  %   \item Discussion of sensible content if applicable, e.g., personally identifiable information or offensive content. [Yes/No/Not Applicable]
  % \end{enumerate}

  % \item If you used crowdsourcing or conducted research with human subjects, check if you include:
  % \begin{enumerate}
  %   \item The full text of instructions given to participants and screenshots. [Yes/No/Not Applicable]
  %   \item Descriptions of potential participant risks, with links to Institutional Review Board (IRB) approvals if applicable. [Yes/No/Not Applicable]
  %   \item The estimated hourly wage paid to participants and the total amount spent on participant compensation. [Yes/No/Not Applicable]
  % \end{enumerate}

\end{enumerate}

\clearpage
\appendix
\thispagestyle{empty}

% Supplementary material: To improve readability, you must use a single-column format for the supplementary material.
\onecolumn
\aistatstitle{Identities, Not Pairs: Supplementary Material}
\section{Proofs}\label{app:proofs}

Throughout, $\cP=\{\{i,j\}:1\le i<j\le m+2\}$, $N=|\cP|=\binom{m+2}{2}$, $S_e=g(U_i,U_j)$ for $e=\{i,j\}$, $e^\star=\{m+1,m+2\}$, and $R(e)=\#\{e'\in\cP:S_{e'}\ge S_e\}$. Where a result has a numerical counterpart in the released code, it is named in
brackets.

\subsection{Theorem~\ref{thm:valid} (finite-sample validity)}\label{app:valid}

\begin{lemma}\label{lem:count}
For every real array $(S_e)_{e\in\cP}$ and every integer $k\ge0$,
$\#\{e\in\cP:R(e)\le k\}\le k$.
\end{lemma}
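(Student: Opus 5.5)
We argue by contradiction on the set $A_k=\{e\in\cP:R(e)\le k\}$, using only the definition $R(e)=\#\{e'\in\cP:S_{e'}\ge S_e\}$ with ties counted against $e$. The case $k=0$ is immediate: $R(e)\ge1$ always, since $e$ itself satisfies $S_e\ge S_e$, so $A_0$ is empty. For $k\ge1$, suppose $A_k$ is nonempty. We pick $e_0\in A_k$ minimizing the score $S_{e_0}$ over $A_k$; such an element exists because $\cP$ is finite.

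The key step is the inclusion $A_k\subseteq\{e'\in\cP:S_{e'}\ge S_{e_0}\}$, which holds by the choice of $e_0$ as the minimizer. Taking cardinalities gives
\[
\#A_k\;\le\;\#\{e'\in\cP:S_{e'}\ge S_{e_0}\}\;=\;R(e_0)\;\le\;k,
\]
where the last inequality is just $e_0\in A_k$. This proves the lemma for every real array, with no assumption on ties.

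An equivalent way to see the same bound: sort the scores in decreasing order. An element whose score is the $j$-th largest, counted with multiplicity and with ties placed at the end of their block, has $R\ge j$. Hence at most $k$ elements can have $R\le k$.

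There is no real obstacle here. The only point needing care is the treatment of ties: because $R$ uses $\ge$, tied pairs all receive the largest rank in their tie block. This is exactly what makes the minimizer argument work, and it is also why equality $\#A_k=k$ may fail when scores tie. Theorem~\ref{thm:valid} then follows by combining this count with the uniform distribution of $e^\star$ over pairs.
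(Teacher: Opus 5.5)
Your proof is correct and is essentially the paper's argument: the paper also picks the score-minimizer $e_0$ over $A=\{e:R(e)\le k\}$ and uses $A\subseteq\{e':S_{e'}\ge S_{e_0}\}$, phrased as a contradiction ($|A|\ge k+1$ forces $R(e_0)\ge k+1$), whereas you state the same bound $\#A_k\le R(e_0)\le k$ directly. Your opening sentence announces a contradiction argument, but what follows is this direct form, so that sentence is just a misnomer.
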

\begin{proof}
Suppose the set $A=\{e:R(e)\le k\}$ has more than $k$ elements and let
$e_0\in A$ minimize $S_e$ over $A$. Every $e\in A$ satisfies $S_e\ge S_{e_0}$, so
$R(e_0)\ge|A|\ge k+1$, contradicting $e_0\in A$.
\end{proof}

\begin{proof}[Proof of Theorem~\ref{thm:valid}]
For a permutation $\sigma$ of $[m+2]$ let $\sigma(e)=\{\sigma(i),\sigma(j)\}$; this is a bijection of $\cP$. Define the permuted array $S^\sigma_e=S_{\sigma(e)}$, which is the array of scores of the relabelled units $(U_{\sigma(1)},\dots,U_{\sigma(m+2)})$
because $g$ is symmetric. Under $H_0$ and Assumption~\ref{ass:exch} the relabelled units have the same joint law as the original ones, hence $(S^\sigma_e)_{e\in\cP}\overset{d}{=}(S_e)_{e\in\cP}$. The rank of $e^\star$ in $S^\sigma$ is
$\#\{e':S_{\sigma(e')}\ge S_{\sigma(e^\star)}\}=\#\{e'':S_{e''}\ge S_{\sigma(e^\star)}\}=R(\sigma(e^\star))$.
Therefore $R(e^\star)\overset{d}{=}R(\sigma(e^\star))$ for every fixed $\sigma$. Now let $\sigma$ be uniformly distributed and independent of the units. Then $e_\sigma=\sigma(e^\star)$ is uniform on $\cP$ and independent of $S$, and for $k=\lfloor tN\rfloor$,
\[
\PR(p\le t)=\PR(R(e^\star)\le k)=\PR(R(e_\sigma)\le k)
=\E\Big[\tfrac1N\#\{e\in\cP:R(e)\le k\}\Big]\le \frac kN,
\]
by Lemma~\ref{lem:count}. If ties occur with probability zero, the $k$ pairs with the largest scores have ranks $1,\dots,k$ exactly, the inequality in Lemma~\ref{lem:count} is an equality, and $\PR(p\le t)=\lfloor tN\rfloor/N$.
\end{proof}

\emph{Remarks.} (i) Only invariance of the law of the score array under a group acting transitively on $\cP$ is used; pairs need not be independent, and the same proof covers ordered pairs for a non-symmetric $g$, with $N'=(m+2)(m+1)$. Symmetry is what makes the unordered array invariant: with $g(u,v)=u-v$, $m=1$ and i.i.d.\ Gaussian units, the trial pair has rank one with
probability $5/12>1/3$. (ii) $g$ may depend on data independent of all units (Corollary~\ref{cor:cohort}); the argument is then conditional on those data. (iii) If the probe's identity occurs in calibration, or calibration and deployment
identities come from different populations, Assumption~\ref{ass:exch} fails and so may the guarantee (Section~\ref{sec:adverse}). 

\subsection{Theorem~\ref{thm:lower} (calibration-only thresholds)}\label{app:lower}

\begin{proof}
\emph{Exchangeable part.} Let $H$ be a $\rho$-hub for $Q$ at level $\bar c$ and consider $\mathcal L_{Q,H}$: draw a position $J$ uniformly from $[m+2]$, let $U_J\sim H$ and $U_i\sim Q$ for $i\ne J$, all independent. This law is exchangeable.
On $\{J\in\{m+1,m+2\}\}$, which has probability $2/(m+2)$ and is independent of the calibration units, the calibration units are i.i.d.\ $Q$, so $\hat\tau$ (with its independent randomization) has its law under $Q$. The trial score is $g(V,U)$ with
$V\sim H$, $U\sim Q$ independent of calibration, and exceeds $\bar c$ with probability at least $\rho$. On $\{\hat\tau\le c\}\cap\{g(V,U)\ge\bar c\}$ the trial is accepted because $g(V,U)\ge\bar c\ge c\ge\hat\tau$. Therefore
\[
\alpha\ \ge\ \PR_{\mathcal L_{Q,H}}(\text{accept})\ \ge\ \frac{2}{m+2}\,\rho\,\PR_Q(\hat\tau\le c).
\]
Taking the supremum over admissible $H$ gives the claim with $\rho=\rho_Q(\bar c)$ (or any $\rho<\rho_Q(\bar c)$).

\emph{PAC part.} Let $\pi^*=1-\sqrt{1-\alpha/\rho}$, take any $\pi\in(\pi^*,1)$ and $Q_\pi=(1-\pi)Q+\pi H$, units i.i.d.\ $Q_\pi$, each carrying a latent label ``hub'' with probability $\pi$. Given the calibration units, the conditional FAR of any threshold
$t\le c$ is at least the probability that the trial contains a hub unit and scores $\ge\bar c$, which is at least $\rho\,(1-(1-\pi)^2)>\rho\,(1-(1-\pi^*)^2)=\alpha$ (a trial with one hub scores $\ge\bar c$ with probability $\ge\rho$ against the other unit, and a trial with two hubs with probability $\ge\rho$ by the hub-hub condition). Hence $\{\hat\tau\le c\}\subseteq\{\FAR(\hat\tau\mid U_{1:m},\xi)>\alpha\}$ and $\PR_{Q_\pi}(\hat\tau\le c)\le\delta$. On the event that no calibration unit carries the
hub label, of probability $(1-\pi)^m$, the calibration units are i.i.d.\ $Q$; so $(1-\pi)^m\PR_Q(\hat\tau\le c)\le\delta$. Letting $\pi\downarrow \pi^*$ gives $\PR_Q(\hat\tau\le c)\le\delta(1-\pi^*)^{-m}=\delta(1-\alpha/\rho)^{-m/2}$.
\end{proof}

\emph{Remarks.} (a) Latent labels avoid assuming that $Q$ and $H$ have disjoint supports, and a diffuse $H$ avoids two identical trial images; the hub-hub condition is used only in part (b). (b) The proof uses only that $\hat\tau$ is a measurable
function of the calibration units (not merely of their scores) and independent randomization; it therefore covers rules that inspect the embeddings. (c) With several images per calibration identity, the same proof applies when hubness is a property of
the identity, i.e.\ all images of a hub identity are drawn from $H$. (d) The i.i.d.\ analogue of part (a), contamination at rate $1/(m+1)$, gives $\PR_Q(\hat\tau\le c)\le e(m+1)^2\alpha/((2m+1)\rho)$. (e) For unquantized cosine similarity on the full sphere $S^{d-1}$ with $d\ge2$ and $\bar c<1$, where diffuse hub laws near every direction are admissible, $\sup_w Q(\cos(w,U)>\bar c)\le\rho_Q(\bar c)\le\sup_w Q(\cos(w,U)\ge\bar c)$. This sandwich need not hold if hub units are restricted to the frozen encoder's attainable image set. The upper bound follows by averaging cap occupancies over $V\sim H$. For the lower bound, choose a diffuse $H$ concentrated in an arbitrarily small neighbourhood of a direction $w$: hub-hub scores exceed $\bar c$, while the cross probability tends to at least the strict-cap mass by Fatou's lemma. Atoms on a cap boundary can make the two bounds differ, so an equality needs a boundary regularity condition. (f) For $\rho=1$ and even $m$ the bound of part (a) is attained when hub scores exceed all
other scores. With probability $q=(m+2)\alpha/2\le1$, independently of the units, accept iff the trial pair scores above all $m/2$ disjoint calibration pairs, and otherwise reject. The trial pair and the $m/2$ disjoint pairs are exchangeable under Assumption~\ref{ass:exch}, so $\FAR\le q/(m/2+1)=\alpha$ for every such law, while on hub-free calibration sets $\hat\tau\le c$ with probability $q$ for every $c<\bar c$ above all non-hub scores.

\subsection{COVE's FAR conditional on the probe}\label{app:probe}

\begin{proposition}\label{prop:probe}
Under Assumption~\ref{ass:exch}, for almost every $u$ under the law of $U_{m+1}$, $\PR(\text{COVE accepts}\mid U_{m+1}=u)\le\lfloor\alpha N\rfloor/(m+1)\le(m+2)\alpha/2$.
\end{proposition}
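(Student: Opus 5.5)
Fix the probe value $u$ and work conditionally on $U_{m+1}=u$. The plan is to show that the $m+1$ scores $g(u,U_1),\dots,g(u,U_m),g(u,U_{m+2})$ are exchangeable conditionally on $U_{m+1}=u$. Exchangeability of $(U_1,\dots,U_{m+2})$ implies that, for every permutation $\sigma$ of $[m+2]$ fixing $m+1$, the conditional law of $(U_i)_{i\ne m+1}$ given $U_{m+1}$ is invariant under $\sigma$. This holds for almost every $u$, using a regular conditional distribution, and is the only place the "almost every" qualifier enters. Applying the fixed function $v\mapsto g(u,v)$ coordinatewise transfers this invariance to the vector of $m+1$ probe scores. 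Hence, for almost every $u$, the trial score $s^\star=g(u,U_{m+2})$ is exchangeable with the $m$ cross scores $g(u,U_i)$, $i\le m$.

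Next I would relate COVE's acceptance to a rank among these $m+1$ scores. By definition $R$ counts, among other pairs, the trial itself and every cross pair $\{i,m+1\}$ with $S_{i,m+1}\ge s^\star$. Define $R_u=1+\#\{i\le m: g(u,U_i)\ge s^\star\}$, the rank of $s^\star$ among the $m+1$ probe scores with ties counted against it. Then $R_u\le R$, so acceptance, i.e.\ $R\le k:=\lfloor\alpha N\rfloor$, implies $R_u\le k$. Lemma~\ref{lem:count}, applied to an array of $m+1$ reals with its own rank function, shows that at most $k$ of the $m+1$ entries have rank $\le k$. By the same averaging over a uniformly random transposition as in the proof of Theorem~\ref{thm:valid}, conditional exchangeability gives
\[
\PR(R_u\le k\mid U_{m+1}=u)=\E\Big[\tfrac{1}{m+1}\#\{j:\text{rank}_j\le k\}\,\Big|\,U_{m+1}=u\Big]\le \frac{k}{m+1}.
\]
This yields the first inequality. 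If $k\ge m+1$, the bound is at least one and holds trivially.

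The second inequality is arithmetic: $\lfloor\alpha N\rfloor/(m+1)\le \alpha\binom{m+2}{2}/(m+1)=(m+2)\alpha/2$.

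The only delicate step is the conditional exchangeability for almost every $u$, since conditioning on a null event requires disintegration. I would handle it by proving the integrated identity
\[
\E\big[\phi(U_{m+1})\,\psi\big((U_i)_{i\ne m+1}\big)\big]=\E\big[\phi(U_{m+1})\,\psi\big((U_{\sigma(i)})_{i\ne m+1}\big)\big]
\]
for bounded measurable $\phi,\psi$ and for each of the finitely many permutations $\sigma$ fixing $m+1$. This identity follows directly from Assumption~\ref{ass:exch}. From it I would conclude equality of the regular conditional laws outside a null set, which is a countable union over the finitely many $\sigma$.
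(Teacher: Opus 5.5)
Your proposal is correct and follows the paper's proof essentially step for step: conditional exchangeability of the $m+1$ probe scores given $U_{m+1}=u$, the domination $R_u\le R$ (with symmetry of $g$ identifying $S_{i,m+1}$ with $g(u,U_i)$), Lemma~\ref{lem:count} applied to that row, and the arithmetic $\lfloor\alpha N\rfloor/(m+1)\le(m+2)\alpha/2$. Your disintegration argument only makes explicit the ``almost every $u$'' step, which the paper asserts directly.
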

\begin{proof}
Conditionally on $U_{m+1}=u$, the units $U_1,\dots,U_m,U_{m+2}$ are exchangeable, hence so are the $m+1$ scores $g(u,U_j)$, $j\ne m+1$. Every one of these scores that is $\ge s^\star=g(u,U_{m+2})$ belongs to a distinct pair counted in $R$, so acceptance
($R\le k=\lfloor\alpha N\rfloor$) implies that $s^\star$ has rank at most $k$ among them. By Lemma~\ref{lem:count} applied to this row of $m+1$ exchangeable scores, the probability is at most $k/(m+1)$, and $k/(m+1)\le\alpha N/(m+1)=(m+2)\alpha/2$.
\end{proof}

\subsection{A variant robust to a probe identity present in calibration}\label{app:robust}

Deployments cannot always ensure that a probe's identity is absent from calibration. For each calibration unit $j$ let $c_j$ be the number of augmented pairs containing $j$ that score $\ge s^\star$, and set $p^+=(R-\min_jc_j)/\binom{m+1}2$.

\begin{proposition}\label{prop:robust}
Suppose that, under $H_0$, either Assumption~\ref{ass:exch} holds, or the probe's identity equals that of one calibration unit $J$, chosen independently of all images, and the units other than $U_J$ are exchangeable. Then $\PR(p^+\le t)\le t$.
\end{proposition}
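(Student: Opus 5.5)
The plan is to reduce both cases of Proposition~\ref{prop:robust} to Theorem~\ref{thm:valid} applied to a smaller augmented set with one calibration unit deleted. The key identity is about deletion. For a calibration index $j\in[m]$, delete $U_j$ and consider the augmented set of the remaining $m+1$ units. It contains $\binom{m+1}{2}$ unordered pairs, among them the trial pair $e^\star=\{m+1,m+2\}$, since $j\notin e^\star$.

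The rank of $e^\star$ in this reduced array, ties counted against it, is
\[
R_{-j}=R-c_j .
\]
This holds because the pairs removed by the deletion are exactly those containing $j$. Of these, exactly $c_j$ score $\ge s^\star$, and none of them is the trial pair itself. Hence
\[
R-\min_j c_j=\max_j R_{-j}\ \ge\ R_{-j'}\quad\text{for every fixed or random } j'\in[m],
\]
so $p^+\ge R_{-j'}/\binom{m+1}{2}$ and therefore $\{p^+\le t\}\subseteq\{R_{-j'}\le t\binom{m+1}{2}\}$.

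\emph{Case 1 (Assumption~\ref{ass:exch} holds).} Take $j'=1$. The vector $(U_2,\dots,U_{m+2})$ is a sub-vector of an exchangeable vector, so it is exchangeable. The trial's two identities still do not occur among $U_2,\dots,U_m$. Theorem~\ref{thm:valid}, with $m-1$ calibration units and the same symmetric $g$, then gives
\[
\PR\bigl(R_{-1}\le t\tbinom{m+1}{2}\bigr)\le \bigl\lfloor t\tbinom{m+1}{2}\bigr\rfloor\big/\tbinom{m+1}{2}\le t .
\]

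\emph{Case 2 (the probe's identity equals that of calibration unit $J$).} Take $j'=J$. Because $J$ is chosen independently of all images, we may condition on $J=j$. Conditionally, the units other than $U_j$ are still exchangeable, and their law is unchanged by the conditioning. After deleting $U_j$, the probe's identity no longer appears among the remaining calibration units, so the reduced array is exactly a COVE array with $m-1$ calibration units under identity exchangeability. Theorem~\ref{thm:valid} gives
\[
\PR\bigl(R_{-j}\le t\tbinom{m+1}{2}\bigm| J=j\bigr)\le t
\]
for each $j$. Averaging over $J$ and using the inclusion above yields $\PR(p^+\le t)\le t$.

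The step needing most care is Case 2. Its conditional exchangeability hypothesis is stated for the units other than $U_J$, so I must check that conditioning on $J$ leaves that law intact, which is where independence of $J$ from the images is used. I must also check that the reduced array uses the same fixed symmetric $g$, including any cohort dependence as in Corollary~\ref{cor:cohort}, so that the invariance argument behind Theorem~\ref{thm:valid} applies verbatim. The remaining points are routine: the counting identity $R_{-j}=R-c_j$ and the fact that the maximum over $j$ only enlarges the statistic.
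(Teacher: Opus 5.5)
Your proposal is correct and follows the paper's proof. The paper also uses the deletion identity $R-c_j$ to get $p^+\ge (R-c_j)/\binom{m+1}{2}$ for every calibration index $j$, and then applies Theorem~\ref{thm:valid} to the reduced array with $j=1$ under Assumption~\ref{ass:exch} and with $j=J$ in the shared-identity case. Your explicit conditioning on $J=j$ is a little more careful than the paper's one-line treatment, though it only needs that the units other than $U_j$ are exchangeable given $J=j$, not that their law is unchanged by the conditioning.
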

\begin{proof}
For any $j$, $R-c_j$ is the rank of the trial among the $\binom{m+1}2$ pairs of the augmented set with unit $j$ removed, so $p^+\ge p_j:=(R-c_j)/\binom{m+1}2$ for every $j$. In the first case take $j=1$: the remaining $m+1$ units are exchangeable and
Theorem~\ref{thm:valid} applies to $p_1$. In the second case take $j=J$: removing the calibration unit that shares the probe's identity leaves $m+1$ exchangeable units of distinct identities, and Theorem~\ref{thm:valid} applies to $p_J$.
\end{proof}
The price is a resolution of $1/\binom{m+1}2$ and the removal of the least-crowded calibration unit; we did not evaluate this variant empirically.

\subsection{Proposition~\ref{prop:inductive} (a valid calibration-only threshold)}\label{app:inductive}

\begin{proof}
Let $k=k_\alpha\ge1$. If $S_{e^\star}>S_{(k)}$, every calibration pair with score $\ge S_{e^\star}$ is strictly above $S_{(k)}$, so there are at most $k-1$ of them; at most $2m$ cross pairs score $\ge S_{e^\star}$; and $e^\star$ itself is counted
once. Hence $R(e^\star)\le k+2m\le\lfloor\alpha N\rfloor$, i.e.\ the acceptance event is contained in $\{p\le\alpha\}$ (this is the dominance statement), and Theorem~\ref{thm:valid} gives $\FAR\le\lfloor\alpha N\rfloor/N\le\alpha$. The rule is
able to accept for some score arrays iff $k_\alpha\ge1$, i.e.\ $\alpha\ge(2m+1)/N$; ties may still force rejection.
\end{proof}

\subsection{Proposition~\ref{prop:power} (power)}\label{app:power}

\begin{proof}
Condition on a genuine trial $(u_a,u_b)$ with score $s$; calibration units are i.i.d., independent of it, and of other identities. Write
$R=1+\binom m2\hat U(s)+X_a+X_b$ with
$X_a=\#\{i\le m:g(u_a,U_i)\ge s\}\sim\mathrm{Bin}(m,h(u_a;s))$, and similarly $X_b$.
Then $\E[R\mid\text{trial}]=N\mu$ with $\mu$ as displayed in Section~\ref{sec:power}
(use $N=\binom m2+2m+1$). By $(x+y)^2\le2x^2+2y^2$ and $\binom m2\le N$,
$\mathrm{Var}(R\mid\text{trial})/N^2\le2\,\mathrm{Var}\,\hat U(s)+2\,\mathrm{Var}(X_a+X_b)/N^2\le v$,
since $\mathrm{Var}(X_a+X_b)\le2\,\mathrm{Var}X_a+2\,\mathrm{Var}X_b\le2m(h(u_a;s)+h(u_b;s))$.
On $B$, $\mu\le k/N-\eta\alpha$, so Chebyshev's inequality gives
$\PR(R>k\mid\text{trial})\le\PR(R-N\mu\ge N\eta\alpha\mid\text{trial})\le\min\{1,v/(\eta\alpha)^2\}$.
Hence $\TAR\ge\E_{\mathrm{gen}}[\ind_B(1-\min\{1,v/(\eta\alpha)^2\})]$.
\end{proof}

\emph{Choice of $B$.} Since $k/N\ge\alpha-1/N$ and
$\mu\le\bar G(s)+\big(1+m\Delta\big)/N$ with $\Delta=\omega(s)^+$,
the event $\{\bar G(s)\le\alpha(1-2\eta)\}\cap\{(2+m\Delta)/N\le\eta\alpha\}$ implies
$\mu\le k/N-\eta\alpha$. With $N\ge m^2/2$ the second condition holds when
$\alpha m^2\ge8/\eta$ and $\Delta\le\eta m\alpha/4$. On this event
$h(u_a;s)+h(u_b;s)\le2\alpha+\eta m\alpha/4$ and, by Proposition~\ref{prop:zoo},
$\mathrm{Var}\,\hat U(s)\le4\zeta_1(s)/m+2\alpha/(m(m-1))$, so
\[
\frac{v}{(\eta\alpha)^2}\le\frac{8\,\zeta_1(s)/\alpha^2}{\eta^2m}+\frac{4}{\eta^2m(m-1)\alpha}
+\frac{4m(2+\eta m/4)}{\eta^2N^2\alpha}.
\]
Each term vanishes when $\alpha m^2\to\infty$ and $m^{-1}\sup_{\bar G(s)\le\alpha}\zeta_1(s)/\alpha^2\to0$,
which gives the corollary in Section~\ref{sec:power}. The condition is on $\zeta_1(s)/\alpha^2=\kappa(s)\bar G(s)^2/\alpha^2$, not on $\kappa(s)$ alone, which can be large in the extreme tail.

\subsection{Proposition~\ref{prop:zoo} (zoo index)}\label{app:zoo}

For i.i.d.\ units, $\hat U(\tau)$ is a U-statistic with symmetric kernel $\phi(u,v)=\ind\{g(u,v)\ge\tau\}$; $h$ is a per-unit (per-image) rate, and under sampling of identities without replacement a finite-population correction applies. Its Hoeffding decomposition
\citep{hoeffding1948class} gives
$\mathrm{Var}\,\hat U=\binom m2^{-1}\big[2(m-2)\zeta_1+\zeta_2\big]$ with
$\zeta_1=\mathrm{Var}\,\E[\phi(U,U')\mid U]=\mathrm{Var}\,h(U;\tau)$ and
$\zeta_2=\mathrm{Var}\,\phi(U,U')=\bar G(\tau)(1-\bar G(\tau))$, which is the stated
formula. Dividing by $\bar G(\tau)^2$ at $\bar G(\tau)=\alpha$
gives the relative standard error stated in Proposition~\ref{prop:zoo}; the formula is exact for i.i.d.\ units. A
binomial model for $\binom m2$ independent pairs predicts relative standard error $\sqrt{2(1-\alpha)/(m(m-1)\alpha)}$; the ratio is $\sqrt{1+2\kappa(m-2)\alpha/(1-\alpha)}$.

\subsection{Exact integer similarity}\label{app:exact}

Embeddings are $\ell_2$-normalized, multiplied by a single scale per (dataset, encoder) computed from the fixed feature cache before calibration draws and chosen so that every rounded vector satisfies $\|q\|^2<2^{24}$, and rounded to integers. Specifically, the implementation uses $\lambda=(4095-\sqrt d/2)/M$ with $M=\max_i\|f_i\|_2$ over that cache. Every coordinate product and, by Cauchy--Schwarz, every partial sum of a dot product is an integer of magnitude below $2^{24}$, hence represented exactly in IEEE
single precision, whatever the summation order, blocking, or split of the matrix product, for standard FP32 kernels that form only partial sums of products (not Strassen-type or split-precision emulation, which the canary below would detect); TF32 and reduced-precision modes are disabled, because TF32 represents integers exactly only up to $2^{11}$. The pair score is therefore a fixed, exactly symmetric integer-valued function, as Theorem~\ref{thm:valid} requires. Before each run a canary compares the single-precision matrix product and the row-wise product with an int64 elementwise reference on production-sized blocks, off-diagonal entries included. With $q=\lambda f+\epsilon$, $\|\epsilon\|\le\sqrt d/2$, rounding changes a cosine similarity by at most $\sqrt d/\lambda+d/(4\lambda^2)\le1.2\cdot10^{-2}$ ($d\le2048$) and typically by about $10^{-4}$; ties among pair scores are rare and only make the test more conservative.

\section{Experimental protocol}\label{app:protocol}

\paragraph{Pools and identity splits.} For each dataset we form a pool of identities. A fixed random 20\% of the calibration-side identities is reserved as the external cohort $Z$ used only by the cohort-normalized scores (Section~\ref{sec:cohort}); these
identities never appear as calibration or trial units. In the main protocol (\emph{random identity split}), each calibration draw samples $m$ distinct identities uniformly from the remaining pool and one uniformly random image of each; trials use
only the identities not drawn. This realises Assumption~\ref{ass:exch} exactly. The \emph{fixed split} and \emph{category split} protocols of Q6 (Section~\ref{sec:experiments}) instead calibrate on one fixed set of identities (the official training products for SOP, a seeded 500/500 class partition for ImageNet, or six SOP super-categories) and test on
another.

\paragraph{Trials.} Impostor trials draw two distinct identities uniformly at random and one uniform image of each, exactly as calibration units are drawn; sampling images uniformly instead over-weights identities with many images and violates
Assumption~\ref{ass:exch} (we found and fixed this defect during development; the test \texttt{tests/test\_engine.py} fails on the defective sampler). Genuine trials draw a uniformly random identity with at least two images and two distinct images of it. Each
calibration draw uses \NImpPerDraw\ impostor and \NGenPerDraw\ genuine trials; all rules are evaluated on the same draw and trials, so comparisons are paired.

\paragraph{Levels, metrics and intervals.} Target levels are $\alpha\in\{10^{-1},3\cdot10^{-2},\dots,10^{-7}\}$. The \emph{realised FAR} of a rule in a cell is the mean over draws of the fraction of accepted impostor trials; its 95\% interval is $\pm1.96$ standard deviations of the per-draw fractions divided by $\sqrt{D}$, with $D$ the number of draws. For COVE we compare it with the exact level $\lfloor\alpha N\rfloor/N$ of Theorem~\ref{thm:valid}. For threshold rules we also compute the \emph{calibration-conditional} FAR of each draw, the fraction of a sample of $5\times10^7$ population impostor pairs scoring above the draw's threshold, and report
the fraction of draws in which it exceeds $\alpha$. The oracle threshold is the $(1-\alpha)$-quantile of the same population sample. For COVE's validity (Q1) the standardized deviation of a comparison is $z=(\text{realised FAR}-\lfloor\alpha N\rfloor/N)/\mathrm{se}$, and it is significant if $|z|$ exceeds the two-sided Bonferroni threshold for all comparisons at
family-wise level 5\%.

\paragraph{Baselines.} (i) \emph{Empirical quantile}: accept iff the trial score exceeds the $(\lfloor\alpha\binom m2\rfloor+1)$-th largest calibration pair. (ii) \emph{Binomial UCB} (over pairs): the largest threshold whose Clopper--Pearson
95\% upper bound on the FAR, treating the $\binom m2$ pairs as independent, is at most $\alpha$. (iii) \emph{Conformal over pairs}: the conformal p-value of the trial among calibration pairs, treated as exchangeable. (iv) \emph{Disjoint-pairs conformal}: split
conformal on $\lfloor m/2\rfloor$ disjoint calibration pairs (valid). (v) \emph{U-conformal}: Proposition~\ref{prop:inductive} (valid). (vi) \emph{Disjoint-pairs PAC}: on the same $n=\lfloor m/2\rfloor$ disjoint pairs, accept iff the trial exceeds the
$(k+1)$-th largest score, with $k$ the largest integer such that $\PR(\mathrm{Bin}(n,\alpha)\le k)\le\delta$ ($\delta=0.05$). When identities are i.i.d.\ draws these pairs are i.i.d.\ and the calibration-conditional FAR exceeds $\alpha$ with probability
at most $\delta$; our draws are without replacement from a finite pool, so this guarantee is approximate there. The rule is non-trivial only once $(1-\alpha)^n\le\delta$. (vii) \emph{Identity bootstrap}: resample the $m$ calibration identities with replacement $B=200$ times; in a resample with multiplicities $w_i$, pair $\{i,j\}$, $i\ne j$, has weight $w_iw_j$ (pairs within one identity are excluded); $t_b$ is the smallest calibration score whose weighted exceedance fraction is at most $\alpha$, and the threshold is the 95th percentile of $t_1,\dots,t_B$ (evaluated for $m\le1000$). (viii) \emph{Oracle}: the population threshold (not a procedure; reference only).

\paragraph{Oracle and conditional FAR.} The oracle global threshold at level $\ell$ is the $(1-\ell)$-quantile of $5\times10^7$ sampled population impostor scores, and its true-accept rate is measured on $2\times10^6$ sampled population genuine trials (identity-uniform, cohort identities excluded); it is a Monte-Carlo approximation. When we compare with COVE we evaluate
it at COVE's exact level $\lfloor\alpha N\rfloor/N$, and we report power ratios only where the sample holds at least 100 impostor pairs above the threshold (relative standard error of the oracle's FAR at most 10\%) and its realised FAR is within 10\% of the level (\OracleUnreliableDropped\ combinations are excluded for this reason), and where the oracle's true-accept rate is at least $0.02$. The calibration-conditional FAR of a threshold rule in a draw is the fraction of the sampled population impostor pairs above its threshold among pairs whose two identities both remain after that draw's calibration identities are removed.

\paragraph{Mechanism diagnostic.} To estimate per-unit impostor rates and the zoo index, we sample 5,000 probes and 40,000 references by selecting an identity uniformly and then one image uniformly within that identity, with replacement across draws. References sharing the probe's identity are excluded. The binomial sampling contribution to the variance of the estimated rates is
subtracted before reporting the zoo index. Directional cap occupancy is a separate descriptive, image-sampled calculation and is not interpreted as the theoretical hub parameter.

\paragraph{Hub model.} The synthetic experiment draws $2\times10^4$ identities with centres $\mathrm{normalize}(\beta e+z)$, $z\sim\mathcal N(0,I_d/d)$, $d=64$, where $e$ is a fixed direction and $\beta=0.6$, except for a fraction $p_{\mathrm{hub}}\in\{0,0.01,0.03\}$ of hub identities with $\beta=3$; each identity has three images $\mathrm{normalize}(\text{centre}+0.6\,\varepsilon)$,
$\varepsilon\sim\mathcal N(0,I_d/d)$. Draws use $2\times10^5$ impostor and $2\times10^4$ genuine trials, and the oracle uses $2\times10^7$ impostor pairs; everything else is as above.

\paragraph{Provenance and reproduction.} Raw result records identify their configuration, source commit, library versions and GPU. The audit (\path{tools/audit.py}) recomputes each summary from those records, checks that every reported record was written from a clean
source tree whose engine code matches the final version up to comments and docstrings, and checks each data-dependent qualitative claim of the text. A separate spot check re-runs one cell for each (tier, source commit) of the calibration-cell and hub-model tiers at the final code and compares their stored per-draw rows bit for bit (\path{tools/reproduce_check.py}, \path{audits/reproduce_check.json}); it does not establish bitwise reproduction of every run. The Git history itself is not part of the anonymous archive.

\section{Additional results}\label{app:breadth}

\begin{figure}[h]\centering
\includegraphics[width=0.9\textwidth]{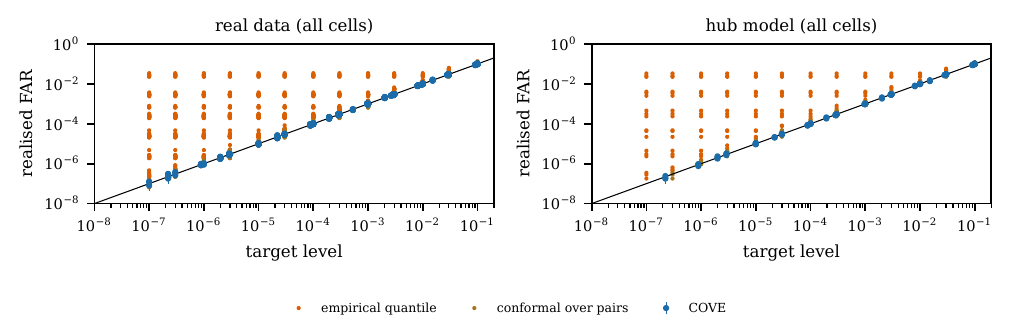}
\caption{Realised FAR against the target level for every random-split cell: real data (left) and
the hub model (right), with 95\% confidence intervals for the mean over calibration draws.}\label{fig:validity}
\end{figure}

\begin{table}[h]\centering\small
\caption{All datasets and encoders (raw score, random identity split). COVE's realised FAR
over its level (median and maximum over $m$ and $\alpha$), and COVE's true-accept rate as a
percentage of the oracle's where both valid calibration-only rules are trivial and the sampled
oracle is reliable (median, minimum, number of (cell, level) combinations).}\label{tab:breadth}
\begin{tabular}{llccccr}
\toprule
data & encoder & FAR/level med. & max & TAR \% med. & min & $n$ \\
\midrule
Cars & CLIP-B/16 & 1.01 & 1.03 & 99 & 90 & 9 \\
Cars & DINOv2-B & 0.99 & 1.02 & 99 & 89 & 9 \\
Cars & ResNet-50 & 0.99 & 1.03 & 100 & 92 & 8 \\
CUB & CLIP-B/16 & 1.00 & 1.04 & 100 & 90 & 9 \\
CUB & DINOv2-B & 1.00 & 1.08 & 97 & 89 & 9 \\
CUB & ResNet-50 & 1.01 & 1.06 & 102 & 91 & 9 \\
GLDv2 & CLIP-B/16 & 1.01 & 1.05 & 100 & 92 & 17 \\
GLDv2 & DINOv2-B & 1.00 & 1.03 & 99 & 89 & 17 \\
GLDv2 & ResNet-50 & 0.99 & 1.09 & 99 & 86 & 17 \\
ImageNet & CLIP-B/16 & 1.00 & 1.08 & 100 & 86 & 14 \\
ImageNet & CLIP-L/14 & 1.00 & 1.08 & 101 & 88 & 9 \\
ImageNet & ConvNeXt-B & 1.00 & 1.02 & 98 & 83 & 9 \\
ImageNet & DINOv2-B & 1.00 & 1.03 & 99 & 83 & 14 \\
ImageNet & DINOv2-L & 1.00 & 1.13 & 99 & 85 & 9 \\
ImageNet & DINOv2-S & 1.00 & 1.05 & 98 & 86 & 9 \\
ImageNet & ResNet-50 & 1.00 & 1.03 & 99 & 84 & 14 \\
ImageNet & SigLIP-B/16 & 1.00 & 1.02 & 100 & 87 & 9 \\
In-shop & CLIP-B/16 & 1.00 & 1.22 & 100 & 86 & 22 \\
In-shop & DINOv2-B & 1.00 & 1.40 & 100 & 88 & 19 \\
In-shop & ResNet-50 & 1.00 & 1.12 & 100 & 88 & 19 \\
SOP & CLIP-B/16 & 1.00 & 1.28 & 101 & 92 & 26 \\
SOP & CLIP-L/14 & 1.00 & 1.12 & 100 & 95 & 9 \\
SOP & ConvNeXt-B & 1.01 & 1.05 & 100 & 95 & 9 \\
SOP & DINOv2-B & 1.00 & 1.12 & 100 & 93 & 26 \\
SOP & DINOv2-L & 1.00 & 1.10 & 100 & 96 & 9 \\
SOP & DINOv2-S & 1.00 & 1.04 & 100 & 98 & 9 \\
SOP & ResNet-50 & 1.00 & 1.19 & 100 & 96 & 26 \\
SOP & SigLIP-B/16 & 1.00 & 1.10 & 99 & 94 & 9 \\
\bottomrule
\end{tabular}
\end{table}

\begin{table}[h]\centering\small
\caption{Realised FAR when calibration and test identities are not exchangeable: the official
fixed split, and calibration on six SOP super-categories with testing on the other six. COVE:
realised FAR over its level (median, range over $m$ and $\alpha$). Quantile: the empirical
quantile's realised FAR over $\alpha$ where $\alpha\binom m2\ge1$ (median, max). Binomial: median
percentage of draws whose calibration-conditional FAR exceeds $\alpha$ (binomial UCB).}\label{tab:adverse}
\begin{tabular}{lllcccccc}
\toprule
 & & & \multicolumn{2}{c}{COVE FAR/level} & \multicolumn{2}{c}{quantile FAR$/\alpha$} & binomial \\
split & data & encoder & median & range & median & max & \% draws \\
\midrule
category & SOP & CLIP-B/16 & 0.78 & 0.61--1.16 & 0.86 & 2.74 & 2 \\
category & SOP & DINOv2-B & 0.40 & 0.33--0.77 & 0.40 & 1.10 & 0 \\
category & SOP & ResNet-50 & 1.47 & 0.81--2.48 & 1.47 & 3.20 & 58 \\
fixed & ImageNet & CLIP-B/16 & 0.76 & 0.64--0.90 & 0.83 & 1.16 & 2 \\
fixed & ImageNet & DINOv2-B & 1.00 & 0.40--1.07 & 1.01 & 1.08 & 10 \\
fixed & ImageNet & ResNet-50 & 0.94 & 0.51--1.00 & 0.99 & 1.26 & 16 \\
fixed & SOP & CLIP-B/16 & 1.04 & 0.95--1.44 & 1.08 & 3.37 & 44 \\
fixed & SOP & DINOv2-B & 0.95 & 0.50--1.12 & 0.97 & 1.88 & 10 \\
fixed & SOP & ResNet-50 & 0.94 & 0.86--1.33 & 0.97 & 2.27 & 14 \\
\bottomrule
\end{tabular}
\end{table}

\section{Assets and licenses}\label{app:assets}

\begin{center}\small
\begin{tabular}{p{0.28\textwidth}p{0.34\textwidth}p{0.32\textwidth}}
\toprule
asset & source used & terms \\
\midrule
ImageNet-1k val \citep{russakovsky2015imagenet} & HF \path{ILSVRC/imagenet-1k} & ImageNet terms of access (non-commercial research)\\
SOP \citep{song2016deep} & HF \path{nyris/stanford-online-products-v1} & released by the authors for research; no explicit license found\\
CUB-200-2011 \citep{wah2011cub} & HF \path{Donghyun99/CUB-200-2011} & non-commercial research and educational use\\
Stanford Cars \citep{krause2013cars} & copy of the official release & released by the authors for research\\
DeepFashion In-shop \citep{liu2016deepfashion} & HF \path{Marqo/deepfashion-inshop} & non-commercial research only (DeepFashion agreement)\\
GLDv2 (mini subset) \citep{weyand2020gldv2} & HF \path{zguo0525/google-landmarks-v2-mini} & mirror per-image terms unverified\\
DINOv2 S/B/L \citep{oquab2024dinov2} & timm \path{*_dinov2.lvd142m} & Apache-2.0\\
CLIP B/16, L/14 \citep{radford2021clip} & timm \path{*_clip_224.openai} & Apache-2.0 (timm card); original MIT\\
SigLIP B/16 \citep{zhai2023siglip} & timm \path{vit_base_patch16_siglip_224.webli} & Apache-2.0\\
ResNet-50 \citep{wightman2021resnet} & timm \path{resnet50.a1_in1k} & Apache-2.0\\
ConvNeXt-B \citep{liu2022convnext} & timm \path{convnext_base.fb_in22k_ft_in1k} & Apache-2.0\\
\bottomrule
\end{tabular}
\end{center}

The original GLDv2 train set lists CC-BY images with per-image attribution metadata; the third-party mini mirror does not document individual image licenses, so we cannot verify its per-image terms. No source images are distributed in our supplement.

The anonymized supplementary manifest (\path{audits/feature_manifest.json}) records the model identifier, available dataset snapshot identifier, and SHA-256 of each cached feature array and archive. It does not record immutable checkpoint revisions for every
pretrained model, so exact feature extraction outside our feature cache is not established.

\section{Compute}\label{app:compute}

All runs used one GPU each, capped at 24\,GiB of memory so that they reproduce on a single consumer GPU; our machine had NVIDIA RTX 6000 Ada (48\,GB) and RTX 5000 Ada (32\,GB) cards. Feature extraction took \FeatureGPUHours\ GPU-hours in total and the
reported evaluations (calibration cells, mechanism diagnostics and oracle curves) \EvalGPUHours\ GPU-hours, with peak allocated memory \PeakMemGB\,GiB. Development runs not reported here pilots, runs superseded by defect fixes, and an earlier complete run
replaced by the clean rerun from which all reported numbers come took a further \DevGPUHours\ logged GPU-hours; \DevUntimedRecords\ development records carried no timing. No model was trained or fine-tuned.

\end{document}